\documentclass{article}

\usepackage{microtype}
\usepackage{graphicx}
\usepackage{booktabs} 

\usepackage{hyperref}

\usepackage[utf8]{inputenc} \usepackage[T1]{fontenc}    \usepackage{hyperref}       \usepackage{url}            \usepackage{booktabs}       \usepackage{amsfonts}       \usepackage{nicefrac}       \usepackage{microtype}      \usepackage{xspace}
\usepackage{xcolor}
\usepackage{graphicx}
\usepackage{amsmath}
\usepackage{amsthm}
\usepackage{algorithm}
\usepackage{algorithmicx} \usepackage{algpseudocode} \usepackage[font=small]{caption}
\usepackage{subcaption}
\usepackage{mathtools}
\usepackage{enumerate}
\usepackage{enumitem}
\usepackage{wrapfig}
\usepackage{booktabs}
\usepackage{thmtools}
\usepackage[export]{adjustbox}
\declaretheorem[name=Proposition]{prop}

\newcommand{\ours}{\textsc{Dream}\xspace}
\newcommand{\import}{\textsc{Import}\xspace}
\newcommand{\varibad}{\textsc{VariBAD}\xspace}
\newcommand{\pearl}{\textsc{Pearl}\xspace}
\newcommand{\pearlub}{\textsc{Pearl-UB}\xspace}
\newcommand{\rl}{RL$^2$\xspace}
\newcommand{\norm}[1]{\left\lVert#1\right\rVert}

\newcommand{\mdpindex}{\mu}
\newcommand{\problems}{\sM}
\newcommand{\rewardpenalty}{c}
\newcommand{\paramexp}{\phi}
\newcommand{\paramin}{\theta}
\newcommand{\paramF}{\psi}
\newcommand{\paramdecoder}{\omega}

\newcommand{\piexp}{\pi^\text{exp}}
\newcommand{\qexp}{\hat{Q}^\text{exp}}
\newcommand{\piin}{\pi^\text{task}}
\newcommand{\qin}{\hat{Q}^\text{task}}
\newcommand{\vin}{\hat{V}^\text{task}}
\newcommand{\actions}{\mathbf{a}}
\newcommand{\rexp}{r^\text{exp}}
\newcommand{\tauexp}{\tau^\text{exp}}

\newcommand{\bartauexp}{\tau^\text{exp}_\text{good}}
\newcommand{\underbartauexp}{\tau^{\text{exp}}_\text{bad}}

\newcommand{\iclrupdate}[1]{#1}
\newcommand{\cameraupdate}[1]{#1}

\newtheorem*{proposition*}{Proposition}{}

\newcommand\sA{\ensuremath{\mathcal{A}}}
\newcommand\sB{\ensuremath{\mathcal{B}}}

\newcommand\sJ{\ensuremath{\mathcal{J}}}

\newcommand\sL{\ensuremath{\mathcal{L}}}
\newcommand\sM{\ensuremath{\mathcal{M}}}
\newcommand\sN{\ensuremath{\mathcal{N}}}
\newcommand\sO{\ensuremath{\mathcal{O}}}

\newcommand\sR{\ensuremath{\mathcal{R}}}
\newcommand\sS{\ensuremath{\mathcal{S}}}

         \newcommand{\bzero}{\mathbf{0}} \newcommand\refeqn[1]{(\ref{eqn:#1})}

\newcommand\refsec[1]{Section~\ref{sec:#1}}

\newcommand\reffig[1]{Figure~\ref{fig:#1}}

\newcommand\reftab[1]{Table~\ref{tab:#1}}
\newcommand\refapp[1]{Appendix~\ref{sec:#1}}

\newcommand\refalg[1]{Algorithm~\ref{alg:#1}}

\ifthenelse{\isundefined{\definition}}{}{}
\ifthenelse{\isundefined{\assumption}}{}{}
\ifthenelse{\isundefined{\hypothesis}}{}{}
\ifthenelse{\isundefined{\proposition}}{}{}
\ifthenelse{\isundefined{\theorem}}{}{}
\ifthenelse{\isundefined{\lemma}}{}{}
\ifthenelse{\isundefined{\corollary}}{}{}
\ifthenelse{\isundefined{\alg}}{}{}
\ifthenelse{\isundefined{\example}}{}{}
      \newcommand{\E}{\ensuremath{\mathbb{E}}}  %

\usepackage[accepted]{icml2021}

\icmltitlerunning{Decoupling Exploration and Exploitation for Meta-Reinforcement Learning}

\begin{document}

\twocolumn[
\icmltitle{Decoupling Exploration and Exploitation for Meta-Reinforcement Learning without Sacrifices}

\icmlsetsymbol{equal}{*}

\begin{icmlauthorlist}
\icmlauthor{Evan Zheran Liu}{stan}
\icmlauthor{Aditi Raghunathan}{stan}
\icmlauthor{Percy Liang}{stan}
\icmlauthor{Chelsea Finn}{stan}
\end{icmlauthorlist}

\icmlaffiliation{stan}{Department of Computer Science, Stanford University}

\icmlcorrespondingauthor{Evan Zheran Liu}{evanliu@cs.stanford.edu}

\icmlkeywords{Machine Learning, ICML}

\vskip 0.3in
]

\printAffiliationsAndNotice{}  

\newcommand{\fix}{\marginpar{FIX}}
\newcommand{\new}{\marginpar{NEW}}

\everypar{\looseness=-1}
\linepenalty=1000

\begin{abstract}
The goal of meta-reinforcement learning (meta-RL) is to build agents that can quickly learn new tasks by leveraging prior experience on related tasks.
    Learning a new task often requires both exploring to gather task-relevant information and exploiting this information to solve the task.
In principle, optimal exploration and exploitation can be learned end-to-end by simply maximizing task performance.
However, such meta-RL approaches struggle with local optima due to a chicken-and-egg problem:
learning to explore requires good exploitation to gauge the exploration’s utility, but learning to exploit requires information gathered via exploration.
Optimizing separate objectives for exploration and exploitation can avoid this problem, but prior meta-RL exploration objectives yield suboptimal policies that gather information irrelevant to the task.
    We alleviate both concerns by constructing an exploitation objective that automatically identifies task-relevant information and an exploration objective to recover only this information.
This avoids local optima in end-to-end training, without sacrificing optimal exploration.
    Empirically, \ours substantially outperforms existing approaches on complex meta-RL problems, such as sparse-reward 3D visual navigation.
    Videos of \ours: \url{https://ezliu.github.io/dream/}

\end{abstract}
 \section{Introduction}

A general-purpose agent should be able to perform multiple related tasks across multiple related environments.
Our goal is to develop agents that can perform a variety of tasks in novel environments, based on previous experience and only a small amount of experience in the new environment.
For example, we may want a robot to cook a meal (a new task) in a new kitchen (the environment) after it has learned to cook other meals in other kitchens.
To adapt to a new kitchen, the robot must both explore to find the ingredients, and use this information to cook.
Existing meta-reinforcement learning (meta-RL)
methods can adapt to new tasks and environments, but, as we identify in this work, struggle when adaptation requires complex exploration strategies.

In the meta-RL setting, the agent is presented with a set of meta-training problems, each in an environment (e.g., a kitchen) with some task (e.g., make pizza);
at meta-test time, the agent is given a new, but related environment and task.
It is allowed to gather information in a few initial exploration episodes, and its goal is to then maximize returns on all subsequent exploitation episodes, using this information.
A common meta-RL approach is to learn to explore and exploit \emph{end-to-end} by training a policy and updating exploration behavior based on how well the policy later exploits using the information discovered from exploration~\citep{duan2016rl, wang2016learning, stadie2018importance, zintgraf2019varibad, humplik2019meta}.
With enough model capacity, such approaches can express optimal exploration and exploitation,
but they create a chicken-and-egg problem that leads to bad local optima and poor sample efficiency:
Learning to explore requires good exploitation to gauge the exploration's utility, but learning to exploit requires information gathered via exploration. Therefore, with only final performance as signal, one cannot be learned without already having learned the other.
For example, a robot chef is only incentivized to explore and find the ingredients if it already knows how to cook with those ingredients,
but the robot can only learn to cook if it can already find the ingredients by exploration.

To avoid this chicken-and-egg problem, we propose to optimize separate objectives for exploration and exploitation by leveraging the \emph{problem ID}---an easy-to-provide unique one-hot for each training meta-training task and environment.
Such a problem ID can be realistically available in real-world meta-RL tasks:
e.g., in a robot chef factory, each training kitchen (problem) can be easily assigned a unique ID, and in a personalized recommendation system, each user (problem) is typically identified by a unique username.
Some prior works~\citep{humplik2019meta, kamienny2020learning} also use these problem IDs, but not in a way that avoids the chicken-and-egg problem.
Others~\citep{rakelly2019efficient, zhou2019environment, gupta2018meta, gurumurthy2019mame, zhang2020learn} also optimize separate objectives, but their exploration objectives learn suboptimal policies that needlessly gather task-irrelevant information.

Instead, we propose an exploitation objective that automatically identifies task-relevant information, and an exploration objective to recover only this information.
We learn an exploitation policy without the need for exploration, by conditioning on a learned representation of the problem ID, which provides task-relevant information
(e.g., by memorizing the locations of the ingredients for each ID / kitchen).
We apply an information bottleneck to this representation to encourage discarding of any information not required by the exploitation policy (i.e., task-irrelevant information).
Then, we learn an exploration policy to only discover task-relevant information by training it to produce trajectories containing the same information as the learned ID representation (\refsec{approach}).
Crucially, unlike prior work, we prove that our separate objectives are \emph{consistent}:
optimizing them yields optimal exploration and exploitation, assuming expressive-enough policy classes and enough meta-training data (\refsec{consistency}).

Overall, this work's main contribution is a consistent decoupled meta-RL algorithm, called \ours:
\textbf{D}ecoupling explo\textbf{R}ation and \textbf{E}xploit\textbf{A}tion in \textbf{M}eta-RL, which overcomes the chicken-and-egg problem (\refsec{dream}).
Theoretically, in a simple tabular example, we show that addressing the coupling problem with \ours provably improves sample complexity over existing end-to-end approaches by a factor exponential in the horizon (\refsec{analysis}).
Empirically, we stress test \ours's ability to learn sophisticated exploration strategies on 3 challenging, didactic benchmarks and a sparse-reward 3D visual navigation benchmark.
On these,
\ours learns to optimally explore and exploit, achieving 90\% higher returns than existing state-of-the-art approaches (\pearl, E-\rl, \import, \varibad), which struggle to learn an effective exploration strategy (\refsec{experiments}).

 \section{Related Work}\label{sec:related_work}

We draw on a rich literature on learning to adapt to related tasks \citep{schmidhuber1987evolutionary, thrun2012learning, naik1992meta, bengio1991learning, bengio1992optimization, hochreiter2001learning, andrychowicz2016learning, santoro2016one}.
Many meta-RL works focus on adapting efficiently to a new task from few samples without optimizing the sample collection process,
via
updating the policy parameters \citep{finn2017modelagnostic, agarwal2019learning, yang2019norml, houthooft2018evolved, mendonca2019guided},
learning a model \citep{nagabandi2018learning, saemundsson2018meta, hiraoka2020meta},
multi-task learning \citep{fakoor2019meta},
or leveraging demonstrations \citep{zhou2019watch}.
In contrast, we focus on problems where targeted exploration is critical for few-shot adaptation.

Approaches that specifically explore to obtain the most informative samples fall into two main categories:
\emph{end-to-end} and \emph{decoupled} approaches.
End-to-end approaches optimize exploration and exploitation end-to-end by updating exploration behavior from returns achieved by exploitation~\citep{duan2016rl, wang2016learning, mishra2017simple, rothfuss2018promp, stadie2018importance, zintgraf2019varibad, humplik2019meta, kamienny2020learning, dorfman2020offline}.
These approaches can represent the optimal policy~\citep{kaelbling1998planning}, but
they struggle to escape local optima due to a chicken-and-egg problem between learning to explore and learning to exploit (\refsec{cyclic}).
Several of these approaches~\citep{humplik2019meta, kamienny2020learning} also leverage the problem ID during meta-training, but they still learn end-to-end, so the chicken-and-egg problem remains.

Decoupled approaches instead optimize separate exploration and exploitation objectives, via, e.g., Thompson-sampling (TS) \citep{thompson1933likelihood, rakelly2019efficient}, obtaining exploration trajectories predictive of dynamics or rewards \citep{zhou2019environment, gurumurthy2019mame,zhang2020learn}, or exploration noise \citep{gupta2018meta}.
While these works do not identify the chicken-and-egg problem, decoupled approaches coincidentally avoid it.
However, existing decoupled approaches, including those~\citep{rakelly2019efficient, zhang2020learn} that leverage the problem ID, do not learn optimal exploration:
TS \citep{rakelly2019efficient} explores by guessing the task and executing a policy for that task,
and hence cannot represent exploration behaviors that are different from exploitation~\citep{russo2017tutorial}.
Predicting the dynamics \citep{zhou2019environment,gurumurthy2019mame,zhang2020learn} is inefficient 
when only a small subset of the dynamics are relevant to solving the task.
In contrast, we propose a separate mutual information objective for exploration, which both avoids the chicken-and-egg problem and yields optimal exploration when optimized (\refsec{analysis}).
Past work \citep{gregor2016variational, houthooft2016vime, eysenbach2018diversity, warde2018unsupervised} also optimize mutual information objectives, but not for meta-RL.

Beyond meta-RL, learning a policy in the general RL setting (i.e., learning from scratch) also requires targeted exploration to gather informative samples.
In contrast to exploration algorithms for general RL \citep{bellemare2016unifying, pathak2017curiosity, burda2018exploration, leibfried2019unified}, which must visit many novel states to find regions with high reward, exploration in meta-RL can be substantially more targeted by leveraging prior experience from related problems during meta-training.
As a result, \ours can learn new tasks in just \emph{two} episodes (\refsec{experiments}), while learning from scratch can require millions of episodes to learn a new task. \begin{figure*}\center
\includegraphics[width=\textwidth]{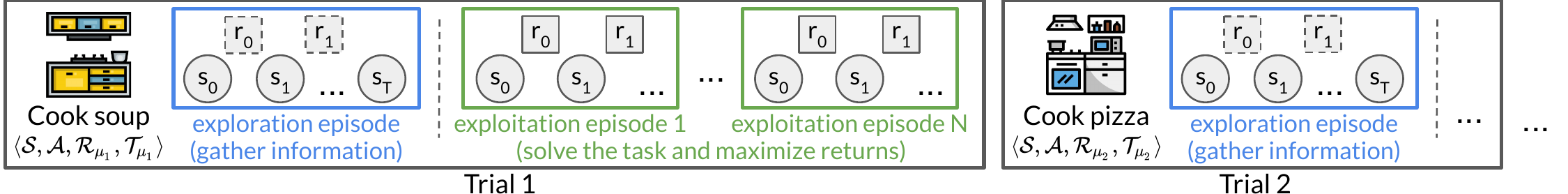}
\vspace{-7mm}
\caption{
\small
Meta-RL setting:
Given a new environment and task, the agent is allowed to first explore and gather information (exploration episode), and then must use this information to solve the task (in subsequent exploitation episodes).
}\label{fig:setting}
\end{figure*}

\section{Preliminaries}\label{sec:prelims}

\textbf{Meta-reinforcement learning.}
The meta-RL setting considers a family of Markov decision processes (MDPs)
$\langle \mathcal{S}, \mathcal{A}, \mathcal{R}_\mdpindex, T_\mdpindex \rangle$ 
with states $\mathcal{S}$, actions $\mathcal{A}$, rewards $\mathcal{R}_\mdpindex$, and dynamics $T_\mdpindex$, indexed by a one-hot \emph{problem ID} $\mdpindex \in \problems$, drawn from a distribution $p(\mu)$.
Colloquially, we refer to the dynamics as the \emph{environment}, the rewards as the \emph{task}, and the entire MDP as the \emph{problem}. 
Borrowing terminology from \citet{duan2016rl}, meta-training and meta-testing both consist of repeatedly running \emph{trials}.
Each trial consists of sampling a problem ID $\mdpindex \sim p(\mdpindex)$ and running $N + 1$ episodes on the corresponding problem.
Following prior evaluation settings~\citep{finn2017modelagnostic, rakelly2019efficient, rothfuss2018promp, fakoor2019meta}, we designate the first episode in a trial as an \emph{exploration} episode consisting of $T$ steps for gathering information, and define the goal as maximizing the returns in the subsequent $N$ \emph{exploitation} episodes (\reffig{setting}).
Following \citet{rakelly2019efficient, humplik2019meta, kamienny2020learning}, the easy-to-provide problem ID is available for meta-training, but not meta-testing trials.

We formally express the goal in terms of an exploration policy $\piexp$ used in the exploration episode and an exploitation policy $\piin$ used in exploitation episodes, but these policies may be the same or share parameters.
Rolling out $\piexp$ in the exploration episode produces an exploration trajectory $\tauexp = (s_0, a_0, r_0, \ldots, s_T)$, which contains information discovered via exploration.
The exploitation policy $\piin$ may then condition on $\tauexp$ and optionally, its history across all exploitation episodes in a trial, to maximize exploitation episode returns.
The goal is therefore to maximize:
\begin{equation}
\label{eqn:standard_objective}
    \mathcal{J}(\piexp, \piin) = \E_{\mdpindex \sim p(\mdpindex), \tauexp \sim \piexp}\left[
        V^{\text{task}}(\tauexp; \mdpindex)
    \right],
\end{equation}
where $V^{\text{task}}(\tauexp; \mdpindex)$ is the expected returns of $\piin$ conditioned on $\tauexp$, summed over the $N$ exploitation episodes in a trial with problem ID $\mdpindex$.

\textbf{End-to-end meta-RL.}
A common meta-RL approach \citep{wang2016learning, duan2016rl, rothfuss2018promp, zintgraf2019varibad, kamienny2020learning, humplik2019meta} is to learn to explore and exploit \emph{end-to-end} by directly optimizing $\sJ$ in \refeqn{standard_objective}, updating both from rewards achieved during exploitation.
These approaches typically learn a single recurrent policy $\pi(a_t \mid s_t, \tau_{:t})$ for both exploration and exploitation (i.e., $\piin = \piexp = \pi$), which takes action $a_t$ given state $s_t$ and history of experiences spanning all episodes in a trial $\tau_{:t} = (s_0, a_0, r_0, \ldots, s_{t - 1}, a_{t - 1}, r_{t - 1})$.
Intuitively, this policy is learned by rolling out a trial,
producing an exploration trajectory $\tauexp$ and, conditioned on $\tauexp$ and the exploitation experiences so far, yielding some exploitation episode returns.
Then, credit is assigned to both exploration (producing $\tauexp$) and exploitation by backpropagating the exploitation returns through the recurrent policy.
Directly optimizing the objective $\sJ$ this way can learn
optimal exploration and exploitation strategies,
but optimization is challenging, which we detail in the next section.
 \section{Decoupling Exploration and Exploitation}\label{sec:approach}

\cameraupdate{In this section, we first illustrate how end-to-end optimization approaches face a chicken-and-egg problem between learning exploration and exploitation, leading to local optima and poor sample complexity (\refsec{cyclic}).
Next, in \refsec{dream}, we propose \ours to sidestep this chicken-and-egg problem by optimizing separate objectives for exploration and exploitation.
Finally, we describe a practical implementation of \ours in \refsec{practical_dream}.
Prior decoupled approaches also optimize separate exploration and exploitation objectives.
However, crucially, as we show in the next section, the optimum of \ours's objectives maximizes returns, while the optimum of prior objectives does not.
}

\subsection{The Problem with Coupling Exploration and Exploitation}
\label{sec:cyclic}

We begin by showing that end-to-end optimization struggles with local optima due to a chicken-and-egg problem, illustrated in \reffig{approach}.
Learning $\piexp$ relies on gradients passed through $\piin$.
If $\piin$ cannot effectively solve the task, then these gradients will be uninformative.
However, to learn to efficiently solve the task, $\piin$ needs good exploration data (trajectories $\tauexp$) from a good exploration policy $\piexp$.
This results in bad local optima as follows:
if our current (suboptimal) $\piin$ obtains low rewards with a good informative trajectory $\bartauexp$, the low reward would cause $\piexp$ to learn to \emph{not} generate $\bartauexp$.
This causes $\piexp$ to instead generate trajectories $\underbartauexp$ that lack information required to obtain high reward, further preventing the exploitation policy $\piin$ from learning.
Typically, early in training, both $\piexp$ and $\piin$ are suboptimal and hence will likely reach this local optimum.

More succinctly, estimates of the expected exploitation returns $V^\text{task}(\tauexp; \mdpindex)$ in \refeqn{standard_objective} (e.g., from value-function approximation) form the learning signal for exploration.
Escaping the local optima requires accurately estimating $V^\text{task}$, which requires many episodes, leading to sample inefficiency.
In \refsec{sample_complexity}, we illustrate this in a simple example.

\begin{figure}\center
\includegraphics[width=0.8\columnwidth]{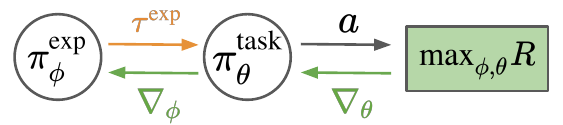}
\vspace{-2mm}
\caption{
\small
Coupling between the exploration policy $\piexp$ and exploitation policy $\piin$.
These policies are illustrated separately for clarity, but may be a single policy.
Since the two policies depend on each other (for gradient signal and the $\tauexp$ distribution), it is challenging to learn one when the other policy has not learned.
}\label{fig:approach}
\vspace{-3mm}
\end{figure}

\subsection{\ours: Decoupling Exploration and Exploitation in Meta-Reinforcement Learning}\label{sec:dream}
While we can sidestep the local optima of end-to-end training by optimizing separate objectives for exploration and exploitation, the challenge is to construct objectives that yield the same optimal solution as the end-to-end approach.
We now discuss how we can use the problem IDs during meta-training to do so.
Intuitively, a good exploration objective should encourage discovering task-relevant distinguishing attributes of the problem (e.g., ingredient locations), and ignoring task-irrelevant attributes (e.g., wall color).
To create this objective, the key idea behind \ours is to \emph{learn} to extract only the task-relevant information from the problem ID, which encodes all information about the problem.
Then, \ours's exploration objective seeks to recover only this task-relevant information.

Concretely, \ours extracts only the task-relevant information from the problem ID $\mdpindex$ via a stochastic encoder $F_\paramF(z \mid \mdpindex)$.
To learn this encoder, we train an exploitation policy $\piin(a \mid s, z)$ to maximize rewards, conditioned on samples $z \sim F_\paramF(z \mid \mdpindex)$, while simultaneously applying an information bottleneck to $z$ to discard information not needed by $\piin$ (i.e., task-irrelevant information).
Then, \ours learns an exploration policy $\piexp$ to produce trajectories with high mutual information with $z$.
In this approach, the exploitation policy $\piin$ no longer relies on effective exploration from $\piexp$ to learn, and once $F_\paramF(z \mid \mdpindex)$ is learned, the exploration policy also learns independently from $\piin$, decoupling the two optimization processes.
During meta-testing, $\mdpindex$ is either unavailable or uninformative because it is simply a novel one-hot ID.
However, the two policies can be easily combined,
since the trajectories generated by $\piexp$ are optimized to contain the same information as the encodings $z \sim F_\paramF(z \mid \mdpindex)$ that the exploitation policy $\piin$ trained on.
Next we describe each of these components in detail.

\textbf{Learning the problem ID encodings and exploitation policy.}
We first learn a stochastic encoder $F_\paramF(z \mid \mdpindex)$ parametrized by $\paramF$ 
and exploitation policy $\piin_\paramin(a \mid s, z)$
parametrized by $\paramin$, which conditions on $z$, by solving the following constrained optimization problem:
\begin{align}
\label{eqn:bottleneck-problem}
& \text{minimize}
& & I(z; \mdpindex) \\
& \text{subject to}
& &  \E_{z \sim F_\paramF(z \mid \mdpindex)}\left[
    V^{\piin_\paramin}(z; \mdpindex)\right] = V^{*}(\mdpindex) \textrm{ for all }\mdpindex, \nonumber
\end{align}
where $V^{\piin_\paramin}(z; \mdpindex)$ is the expected returns of $\piin_\paramin$ on problem $\mdpindex$, given encoding $z$, and $V^{*}(\mdpindex)$ is the maximum expected returns achievable by any policy on problem $\mdpindex$.
Intuitively, optimizing this problem discards any (task-irrelevant) information from $z$ (the objective) that does not help maximize returns (the constraint), and importantly, is independent of exploration.

In practice, we solve this problem (without knowing $V^*(\mdpindex)$), by maximizing the Lagrangian, with dual variable $\lambda^{-1}$:
\begin{align}
\label{eq:obj-encoding}
\underset{\paramF, \paramin}{\text{maximize}}~ \underbrace{\E_{\mdpindex \sim p(\mdpindex), z \sim F_\paramF(z \mid \mdpindex)}\left[
        V^{\piin_\paramin}(z; \mdpindex)
    \right]}_{\text{Returns}} -
\lambda\!\!\!\!\!\!\!\!\!\!\underbrace{\vphantom{\E_{\mdpindex \sim p(\mdpindex), z \sim F_\paramF(z \mid \mdpindex)}\left[V^{\piin_\paramin}\right]}
    I(z; \mdpindex).}_{\text{Information bottleneck}}
\end{align}
We 
maximize the returns via standard RL and
minimize the mutual information $I(z; \mdpindex)$ by minimizing a variational upper bound on it \citep{alemi2016deep},
$\E_\mdpindex\left[ \text{D}_{\text{KL}} (F_\paramF(z \mid \mdpindex) || j(z)) \right]$, where $j$ is any prior
and $z$ is distributed as $p_\paramF(z) = \int_\mdpindex F_\paramF(z \mid \mdpindex) p(\mdpindex) d\mdpindex$.
\cameraupdate{Note that the returns are optimized with respect to both the exploitation policy $\piin_\paramin$ and the encoder $F_\paramF$, while the information bottleneck only depends on and is only optimized with respect to $F_\paramF$.
}

\textbf{Learning an exploration policy given problem ID encodings.}
Once we've obtained an encoder $F_\paramF(z \mid \mdpindex)$ to extract only the necessary task-relevant information required to optimally solve each task,
we can optimize the exploration policy $\piexp$ to produce trajectories that contain this same information by maximizing their mutual information $I(\tauexp; z)$.
We slightly abuse notation to use $\piexp$ to denote the probability distribution over the trajectories $\tauexp$.
Then, the mutual information $I(\tauexp; z)$ can be efficiently maximized by maximizing a variational lower bound \citep{barber2003algorithm} as follows:
\begin{align}
\label{eqn:obj-exp}
    &I(\tauexp; z)
= H(z) - H(z \mid \tauexp) \\
  &\; \geq H(z) + \E_{\mdpindex, z \sim F_\paramF, \tauexp \sim \piexp} \left[\log q_\paramdecoder(z \mid \tauexp) \right]\nonumber \\ 
  &\; = H(z) + \E_{\mdpindex, z\sim F_\paramF}[\log q_\paramdecoder(z \mid s_0)]\;+ \nonumber\\
  &\quad\quad \E_{\mdpindex, z\sim F_\paramF, \tauexp \sim \piexp}\left[
\sum \limits_{t=0}^{T - 1} \log\frac{q_\paramdecoder(z \mid \tauexp_{:t + 1})}{q_\paramdecoder(z \mid \tauexp_{:t})}
  \right], \nonumber
\end{align}
where $q_\paramdecoder$ is any distribution parametrized by $\paramdecoder$,
and $\tau^\text{exp}_{:t}$ denotes the trajectory up to the $t$-th state $(s_0, a_0, r_0, \ldots, s_t)$.
The last line comes from expanding a telescoping series.
We maximize the above expression over $\tauexp$ and over $\paramdecoder$ to learn $q_\paramdecoder$ that approximates the true conditional distribution $p(z \mid \tauexp)$, which makes this bound tight.
In addition, we do not have access to the problem $\mdpindex$ at test time and hence cannot sample from $F_\paramF(z \mid \mdpindex)$.
Therefore, $q_\paramdecoder$ serves as a \emph{decoder} that generates the encoding $z$ from the exploration trajectory $\tauexp$. 

Recall, our goal is to maximize \refeqn{obj-exp} w.r.t., trajectories $\tauexp$ from the exploration policy $\piexp$.
Only the third term depends on $\tauexp$, so we train $\piexp$ on rewards set to be this third term:
\begin{align}
\label{eq:reward-exp}
    &\rexp_t(a_t, r_t, s_{t + 1}, \tauexp_{:t}; \mdpindex) = \\
    &\;\; \E_{z \sim F_\paramF(z \mid \mdpindex)} \left[\log\frac{q_\paramdecoder(z \mid \tauexp_{:t + 1} = [s_{t + 1}; a_t; r_t; \tauexp_{:t}])}{q_\paramdecoder(z \mid \tauexp_{:t})}\right] - \rewardpenalty. \nonumber
\end{align}
Intuitively, the exploration reward for taking action $a_t$ and transitioning to state $s_{t + 1}$ is high if this transition encodes more information about the problem (and hence the encoding $z \sim F_\paramF(z \mid \mdpindex)$) than was already present in the trajectory $\tauexp_{:t}$.
In other words, the reward is the information gain on $z$ from observing $(a_t, r_t, s_{t + 1})$.
We also include a small penalty $\rewardpenalty$ to encourage exploring efficiently in as few timesteps as possible.
This reward is attractive because (i) it is independent from the exploitation policy and hence avoids the local optima described in Section~\ref{sec:cyclic}, and (ii) it is dense, so it helps with credit assignment.
It is also non-Markov, since it depends on $\tauexp$, so we maximize it with a recurrent $\piexp_\paramexp(a_t \mid s_t, \tauexp_{:t})$, parametrized by $\paramexp$.

\subsection{A Practical Implementation of \ours}\label{sec:practical_dream}
\setlength{\textfloatsep}{15pt}\begin{algorithm}[t]
    \small
    \begin{flushleft}
    \begin{algorithmic}[1]
        \State Sample a problem $\mdpindex \sim p(\mdpindex)$
        \State Compute problem ID encoding $z \sim F_\paramF(z \mid \mdpindex)$
        
        \item[]
        \State{\textcolor{blue}{// Exploration episode}}
        \State Roll out exploration policy $\tauexp \sim \piexp_\paramexp(a_t \mid s_t, \tauexp_{:t})$
        \State Update $\piexp_\paramexp$ and $q_\paramdecoder$ to maximize $I(\tauexp; z)$ via rewards in (\ref{eq:reward-exp})
        
        \item[]
        \State{\textcolor{blue}{// Exploitation episode}}
        \State \cameraupdate{Every other episode, choose $z \sim q_\paramdecoder(z \mid \tauexp)$}
        \State Roll out exploitation policy $\piin_\paramin(a \mid s, z)$
        \State Update $\piin_\paramin$ and $F_\paramF$ to maximize (\ref{eq:obj-encoding})
    \end{algorithmic}
    \end{flushleft}
    \caption{\ours meta-training trial}
    \label{alg:ours-simple}
\end{algorithm}

\cameraupdate{Altogether, \ours learns four components.
We summarize each component and detail practical choices for parametrizing them as neural networks below.
}

\vspace{-2mm}
\begin{enumerate}[leftmargin=*]
\item \underline{Encoder $F_\paramF(z \mid \mdpindex)$}:
  \cameraupdate{The encoder learns to extract only task-relevant information from the problem ID $\mdpindex$ via Equation \ref{eq:obj-encoding}.
  Then, \ours learns to efficiently explore by recovering the extracted information.}
  For simplicity, we parametrize the stochastic encoder by learning a deterministic encoding $f_\paramF(\mdpindex)$ and apply Gaussian noise, i.e., $F_\paramF(z \mid \mdpindex) = \mathcal{N}(f_\paramF(\mdpindex), \rho^2 I)$.
We choose a convenient prior $j(z)$ to be a unit Gaussian with same variance $\rho^2 I$, which makes the information bottleneck take the form of simple $\ell_2$-regularization $I(z; \mdpindex) = \| f_\paramF(\mdpindex) \|_2^2$.
    
  \item \underline{Decoder $q_\paramdecoder(z \mid \tauexp)$}:
  \cameraupdate{The decoder learns to map exploration trajectories $\tauexp$ to encodings $z$, used by the exploitation policy during meta-test time, via maximizing Equation \ref{eqn:obj-exp}.}
  Similar to the encoder, we parametrize the decoder $q_\paramdecoder(z \mid \tauexp)$ as a Gaussian  centered around a deterministic encoding $g_\paramdecoder(\tauexp)$ with variance $\rho^2 I$.
    Then, $q_\paramdecoder$ minimizes $\E_{\mdpindex, z \sim F_\paramF(z \mid \mdpindex)}\left[\norm{z - g_\paramdecoder(\tauexp)}^2_2\right]$ w.r.t., $\paramdecoder$ (Equation~\ref{eqn:obj-exp}),
    and the exploration rewards take the form:\\
{$\thickmuskip=5mu \rexp(a, r, s', \tauexp; \mdpindex) = -\norm{f_\paramF(\mdpindex) - g_\paramdecoder([\tauexp; a; r; s'])}^2_2 + \norm{f_\paramF(\mdpindex) - g_\paramdecoder(\tauexp)}^2_2 - \rewardpenalty$} (Equation~\ref{eq:reward-exp}).

  \item \underline{Exploitation policy $\piin_\paramin$} and 4. \underline{Exploration policy $\piexp_\paramexp$}: We learn both policies with double deep Q-learning~\citep{van2016deep}, treating $(s, z)$ as the state for $\piin_\paramin$.
\end{enumerate}
\vspace{-2mm}
In practice, we jointly learn all components by following \refalg{ours-simple} each meta-training trial.
Overall, this avoids the chicken-and-egg problem in \refsec{cyclic} by learning exploitation and the encoder (lines 6--9) independently from exploration.
This enables the encoder to learn quickly, and once it is learned, it forms a learning signal for exploration separate from the expected exploitation returns (lines 3--5), which improves sample efficiency (\refsec{sample_complexity}).

During meta-testing, $\mdpindex$ is unavailable, but since $\piexp_\paramexp$ learns to produce exploration trajectories $\tauexp$ containing the same information as $z \sim F_\paramF(z \mid \mdpindex)$,
we can generate $z$ from $q_\paramdecoder(z \mid \tauexp)$ instead of from $F_\paramF(z \mid \mdpindex)$ for the exploitation policy $\piin_\paramin(a \mid s, z)$.
\cameraupdate{Since the exploitation policy conditions on $z \sim q_\paramdecoder(z \mid \tauexp)$ from the decoder during meta-testing, we also train the exploitation policy conditioned on $z \sim q_\paramdecoder(z \mid \tauexp)$ every other episode during meta-training (line 7), which improves stability.
See \refapp{training_details} for detailed pseudocode and other training details.
}

\begin{figure*}
\center
\vspace{-2mm}
\includegraphics[width=\textwidth]{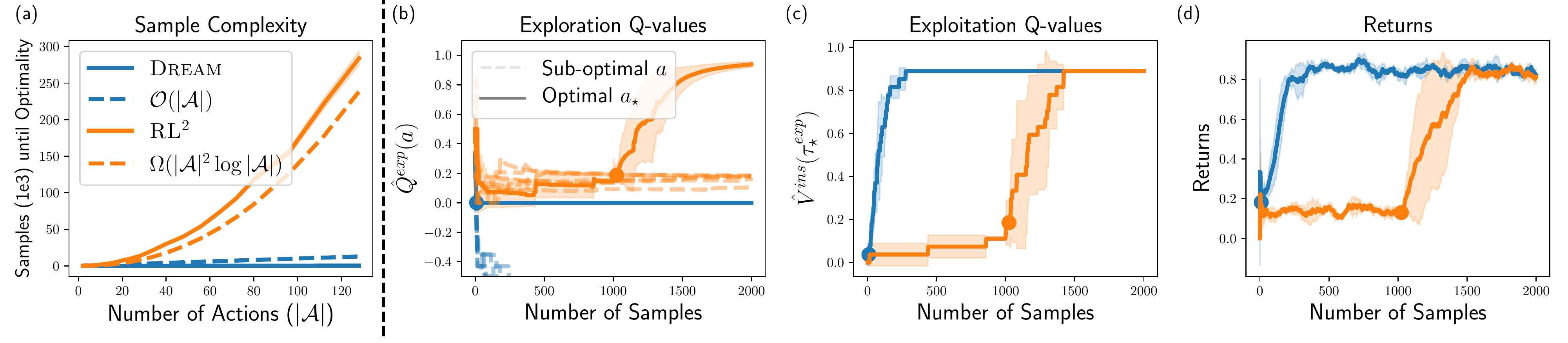}
\vspace*{-7mm}
\caption{
    \small
    (a) Sample complexity of learning the optimal exploration policy as the action space $|\sA|$ grows (1000 seeds).
    (b) Exploration Q-values $\qexp(a)$. The policy $\arg\max_a \qexp(a)$ is optimal after the dot.
    (c) Exploitation values given optimal trajectory $\vin(\tauexp_\star)$.
    (d) Returns achieved on a tabular MDP with $|\sA| = 8$ (3 seeds).
}\label{fig:sample_complexity}
\vspace{-2mm}
\end{figure*}

\begin{figure}\center
\includegraphics[width=0.9\columnwidth]{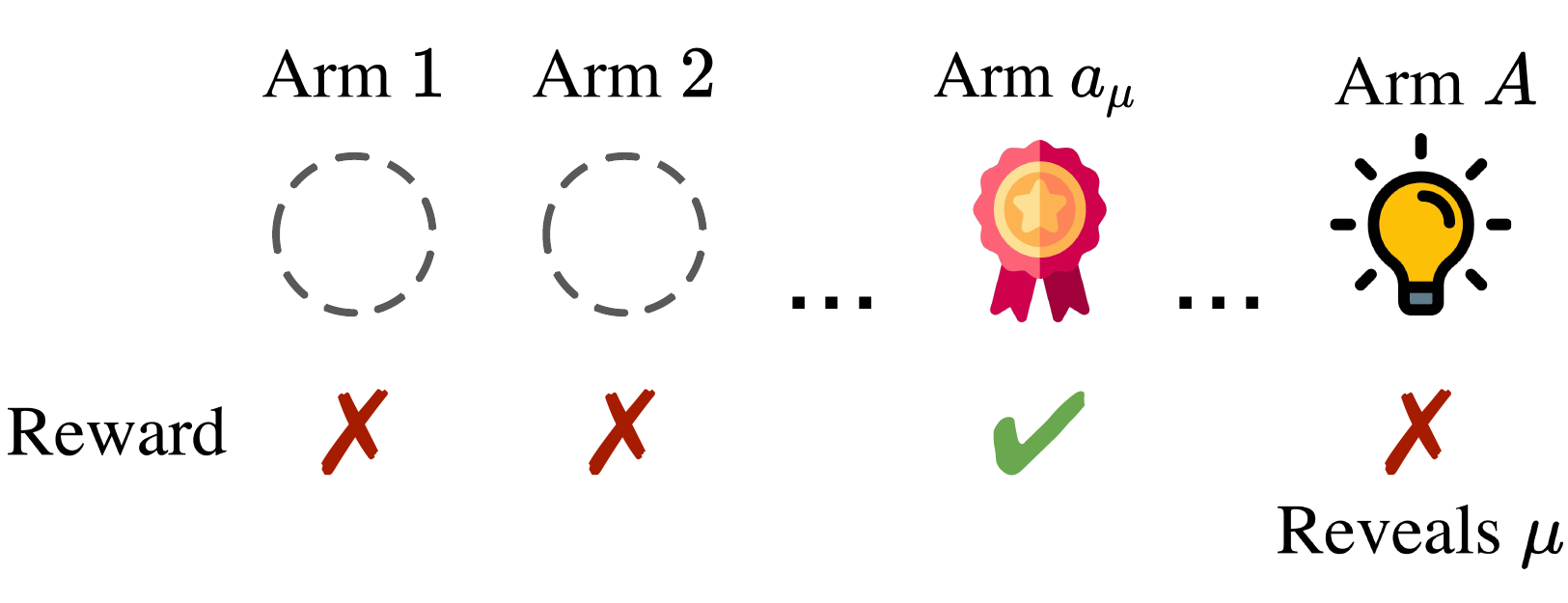}
\vspace{-5mm}
\caption{
    \cameraupdate{\small Simple bandit domain.
    In problem ID $\mdpindex$, action $a_\mdpindex$ obtains reward $1$; all other actions obtain no reward.
    In all problems,
    action $a_\star = A$ reveals $\mdpindex$ and hence, is optimal for exploration.}
}\label{fig:toy_illustration}
\vspace{-2mm}
\end{figure}

\section{Analysis of \ours}\label{sec:analysis}
\subsection{Theoretical Consistency of the \ours Objective}\label{sec:consistency}
A key property of \ours is that it is \emph{consistent}:
maximizing our decoupled objective also maximizes expected returns (Equation~\ref{eqn:standard_objective}).
This contrasts prior decoupled approaches~\citep{zhou2019environment, rakelly2019efficient, gurumurthy2019mame, zhang2020learn}, which also decouple exploration from exploitation, but do not recover the optimal policy even with infinite meta-training trials.
Formally,
\begin{restatable}{prop}{consistency}\label{prop:consistency}
Assume $\langle \mathcal{S}, \mathcal{A}, \mathcal{R}_\mdpindex, \mathcal{T}_\mdpindex \rangle$ is ergodic for all problems $\mdpindex \in \problems$.
Let $V^*(\mdpindex)$ be the maximum expected returns achievable by any exploitation policy with access to the problem ID $\mdpindex$, i.e., with complete information.
Let $\piin_\star, \piexp_\star, F_\star$ and $q_\star(z \mid \tauexp)$ be the optimizers of the \ours objective.
Then, if the function classes \ours optimizes over are well-specified, there exists a finite $T$ such that if the length of the exploration episode is at least $T$,
\begin{equation*}
    \E_{\mdpindex \sim p(\mdpindex), \tauexp \sim \piexp_\star, z \sim q_\star(z \mid \tauexp)}\left[V^{\piin_\star}(z; \mdpindex) \right] =
    \E_{\mdpindex \sim p(\mdpindex)}\left[V^{*}(\mdpindex) \right].
\end{equation*}
\end{restatable}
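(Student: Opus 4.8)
The plan is to chain the two halves of the \ours objective. The exploitation half (the information bottleneck, \refeqn{bottleneck-problem}) forces the exploitation policy to be optimal when fed an encoding drawn from $F_\star(z \mid \mdpindex)$, while the exploration half (the mutual-information objective, \refeqn{obj-exp}) forces the decoder applied to $\tauexp$ to reproduce exactly that encoding distribution; composing the two then yields optimal meta-test returns. I would organize the argument in three steps. First I record what the bottleneck buys us. The constrained problem \refeqn{bottleneck-problem} is feasible under well-specified function classes: take $F$ to encode the full one-hot (e.g.\ $z = \mdpindex$) and let $\piin$ be an optimal policy for each MDP, attaining $V^*(\mdpindex)$ by definition of $V^*$. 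Hence the optimizers satisfy the constraint with equality,
\[
\E_{z \sim F_\star(z \mid \mdpindex)}\left[V^{\piin_\star}(z; \mdpindex)\right] = V^*(\mdpindex) \quad \text{for every } \mdpindex ,
\]
so averaging over $\mdpindex$ already gives the right-hand side of the claim; the remaining task is to show that drawing $z$ at meta-test time through $\piexp_\star$ and $q_\star$ leaves this expectation unchanged.

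Second, and this is the crux, I would show that at the optimum $\tauexp$ is a \emph{sufficient statistic} for $z$, i.e.\ $z \perp \mdpindex \mid \tauexp$ and $q_\star(z \mid \tauexp) = F_\star(z \mid \mdpindex)$ for any $\mdpindex$ consistent with $\tauexp$. Since $\mdpindex$ generates $z$ (via $F_\star$) and $\tauexp$ (via rolling out $\piexp_\star$ on MDP $\mdpindex$, without conditioning on $z$) independently, $z - \mdpindex - \tauexp$ is a Markov chain and the data-processing inequality gives $I(\tauexp; z) \le I(\mdpindex; z)$, with equality exactly when $\tauexp$ retains all the information $\mdpindex$ carries about $z$. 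The exploration objective maximizes $I(\tauexp; z)$, so it suffices to exhibit one policy attaining the bound; the maximizer $\piexp_\star$ then attains it too. This is where ergodicity and the horizon enter: because $\problems$ is finite and each MDP is ergodic, a recurrent exploration policy can actively reach and probe every distinguishing state--action pair, so after finitely many steps $T$ the trajectory reveals the conditional law $F_\star(\cdot \mid \mdpindex)$. Any two problems the encoder separates, $F_\star(\cdot \mid \mdpindex_1) \neq F_\star(\cdot \mid \mdpindex_2)$, must differ in optimal exploitation behavior---otherwise the bottleneck could merge them and further shrink $I(z;\mdpindex)$ without violating the constraint---and hence differ observably in rewards or dynamics. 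With a well-specified decoder class, $q_\star$ equals the true posterior $p(z \mid \tauexp)$, which under this sufficiency collapses to $F_\star(z \mid \mdpindex)$ for the encoder-equivalence class of $\mdpindex$ identified by $\tauexp$.

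Third, I compose the pieces. Sampling $\mdpindex \sim p(\mdpindex)$, then $\tauexp \sim \piexp_\star$, then $z \sim q_\star(z \mid \tauexp)$ produces, by Step 2, a pair $(\mdpindex, z)$ with the same joint law as $\mdpindex \sim p(\mdpindex),\, z \sim F_\star(z \mid \mdpindex)$. Since $V^{\piin_\star}(z; \mdpindex)$ does not depend on $\tauexp$, marginalizing $\tauexp$ out yields
\[
\E_{\mdpindex, \tauexp \sim \piexp_\star, z \sim q_\star}\left[V^{\piin_\star}(z; \mdpindex)\right]
= \E_{\mdpindex, z \sim F_\star(z \mid \mdpindex)}\left[V^{\piin_\star}(z; \mdpindex)\right]
= \E_{\mdpindex}\left[V^*(\mdpindex)\right] ,
\]
where the last equality is Step 1. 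This is exactly the claim.

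The main obstacle is Step 2: converting ``ergodic plus a long-enough horizon'' into the exact statement that the maximizing exploration policy renders $\tauexp$ a sufficient statistic for $z$ after a \emph{finite} $T$. I would need (a) to pin down the finite $T$ after which the recurrent exploration policy recovers, with probability one, every reward/dynamics feature separating two encoder-distinct problems---using that there are finitely many such features and that each is reachable and observable under ergodicity---and (b) to justify the alignment between the encoder's learned distinctions and \emph{observable} distinctions, i.e.\ that minimizing $I(z;\mdpindex)$ never forces $z$ to depend on unidentifiable nuisances. A minor subtlety is the step penalty $\rewardpenalty$ in the exploration reward (Equation~\ref{eq:reward-exp}), which I would take negligible, or fold into the idealized mutual-information objective, so that it does not trade away information needed for exact sufficiency.
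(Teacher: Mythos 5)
Your proposal is correct and mirrors the paper's proof skeleton exactly: (1) the constraint in \refeqn{bottleneck-problem} is active at the optimizers, giving $\E_{z \sim F_\star(z \mid \mdpindex)}\left[V^{\piin_\star}(z;\mdpindex)\right] = V^*(\mdpindex)$ for every $\mdpindex$; (2) the optimum of the exploration objective \refeqn{obj-exp} forces the decoder distribution $q_\star(z \mid \tauexp)$ to coincide with $F_\star(z \mid \mdpindex)$ on trajectories generated by $\piexp_\star$; (3) compose the two. Where you differ is the mechanism for step (2). The paper expands $H(z \mid \tauexp)$ as an expected cross-entropy $H(F_\star(z \mid \mdpindex), p(z \mid \tauexp))$, notes this is minimized when the true posterior equals $F_\star(z \mid \mdpindex)$, asserts achievability at finite $T$ under ergodicity, and uses well-specification to identify $q_\star$ with the true posterior. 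You instead use the conditional independence $z \perp \tauexp \mid \mdpindex$ and the data-processing bound $I(\tauexp; z) \le I(\mdpindex; z)$, with equality exactly when $\tauexp$ is sufficient for $z$, and then argue the bound is attained by exhibiting an attaining policy; these two routes are equivalent in substance. The genuine value you add is the identifiability lemma: encoder-distinct problems must be \emph{observably} distinct (in rewards or dynamics), since otherwise the bottleneck could merge their encodings, strictly shrinking $I(z;\mdpindex)$ without violating the returns constraint. The paper's achievability assertion (``by visiting each transition sufficiently many times'') tacitly relies on exactly this fact and never states it, so your proof is more complete on that point. Both arguments share the remaining weakness you flag yourself: with stochastic dynamics whose supports overlap across encoder-distinct problems, a finite trajectory leaves a posterior with positive mass on several problems, so \emph{exact} equality $p(z \mid \tauexp) = F_\star(z \mid \mdpindex)$ at finite $T$ needs more care than either you or the paper provides --- but that is a defect of the paper's own proof, not a gap specific to yours.
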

\cameraupdate{At the optimum of \ours's objective, the exploration and exploitation policies together achieve}
the maximal returns $V^*(\mdpindex)$ without access to $\mdpindex$ during meta-testing (proof in \refapp{consistency_proof}).
\cameraupdate{This result depends on an ergodicity assumption, which ensures that it is possible to recover all task-relevant information in a  single exploration episode.
However, this can be easily removed by increasing the number of exploration episodes.
Furthermore, \ours empirically achieves near-optimal returns even on non-ergodic MDPs in the experiments (\refsec{experiments}).
}

\subsection{Illustrating the Effect of Coupling on Sample Complexity}\label{sec:sample_complexity}

With enough capacity, end-to-end approaches can also learn the optimal policy, but can be highly sample inefficient due to the coupling problem in \refsec{cyclic}.
We highlight this in a simple tabular example to remove
the effects of function approximation, \cameraupdate{illustrated in \reffig{toy_illustration}}:
Each episode is a one-step bandit problem with action space $\sA = \{1, \ldots, A\}$.
Taking action \cameraupdate{$a_\star = A$}
in the exploration episode leads to a trajectory $\tauexp_\star$ that reveals the problem ID $\mdpindex$;
all other actions $a$ reveal no information and lead to $\tauexp_a$.
The ID $\mdpindex$ identifies a unique action \cameraupdate{$a_\mdpindex$} that receives reward $1$ during exploitation; all other actions receive reward $0$.
Therefore, taking $a_\star$ during exploration is necessary and sufficient to obtain optimal reward $1$.
We now study the number of samples required for \rl (the canonical end-to-end approach) and \ours to learn the optimal exploration policy with $\epsilon$-greedy tabular Q-learning.
We precisely describe a more general setup in \refapp{tabular_example} and prove that \emph{\ours learns the optimal exploration policy in $\Omega(|\sA|^H |\sM|)$ times fewer samples than \rl} in this simple setting with horizon $H$.
\reffig{sample_complexity}a empirically validates this result and we provide intuition below.

In the tabular analog of \rl, the exploitation Q-values form targets for the exploration Q-values: $\qexp(a) \leftarrow \vin(\tauexp_a) \coloneqq \max_{a'}\qin(\tauexp_a, a')$.
We drop the fixed initial state from notation.
This creates the local optimum in \refsec{cyclic}.
Initially $\vin(\tauexp_\star)$ is low, as the exploitation policy has not learned to achieve reward, even when given $\tauexp_\star$.
This causes $\qexp(a_\star)$ to be small and therefore $\arg\max_{a}\qexp(a) \neq a_\star$ (\reffig{sample_complexity}b),
which then prevents $\vin(\tauexp_\star)$ from learning (\reffig{sample_complexity}c) as $\tauexp_\star$ is roughly sampled only once per $\frac{|\sA|}{\epsilon}$ episodes.
This effect is mitigated only when $\qexp(a_\star)$ becomes higher than $\qexp(a)$ for the other uninformative $a$'s (the dot in \reffig{sample_complexity}b-d).
Then, learning both the exploitation and exploration Q-values accelerates, but getting there takes many samples.

In \ours, the exploration Q-values regress toward the decoder $\hat{q}$:
$\qexp(a) \leftarrow \log{\hat{q}(\mdpindex \mid \tauexp(a))}$.
This decoder learns much faster than $\qin$, since it does not depend on the exploitation actions. 
Consequently, \ours's exploration policy quickly becomes optimal (dot in \reffig{sample_complexity}b), which enables quickly learning the exploitation Q-values and achieving high reward (Figures~\ref{fig:sample_complexity}c~and~\ref{fig:sample_complexity}d).

In general, \ours learns in far fewer samples than end-to-end approaches, since in end-to-end approaches like \rl, exploration is learned from a quantity requiring many samples to accurately estimate (i.e., the exploitation Q-values in this case).
Initially, this quantity is estimated poorly, so the signal for exploration can erroneously "down weight" good exploration behavior, leading to the chicken-and-egg problem.
In contrast, in \ours, the exploration policy learns from the decoder, which requires far fewer samples to accurately estimate, avoiding the chicken-and-egg problem.

 \section{Experiments}\label{sec:experiments}
Many real-world problem distributions (e.g., cooking) require exploration (e.g., locating ingredients) that is distinct from exploitation (e.g., cooking these ingredients).
Therefore, we desire benchmarks that require distinct exploration and exploitation to stress test aspects of exploration in meta-RL, such as if methods can:
(i) efficiently explore, even in the presence of distractions;
(ii) leverage informative objects (e.g., a map) to aid exploration;
(iii) learn exploration and exploitation strategies that generalize to unseen problems;
(iv) scale to challenging exploration problems with high-dimensional visual observations.
Existing benchmarks (e.g., MetaWorld~\citep{yu2019meta} or MuJoCo tasks like HalfCheetahVelocity~\citep{finn2017modelagnostic, rothfuss2018promp}) were not designed to test exploration and are unsuitable for answering these questions.
These benchmarks mainly vary the rewards (e.g., the speed to run at) across problems,
so naively exploring by exhaustively trying different exploitation behaviors (e.g., running at different speeds) is optimal.
They further don't include visual states, distractors, or informative objects,
which test if exploration is efficient.
We therefore design new benchmarks meeting the above criteria, testing (i-iii) with didactic benchmarks, and
(iv) with a sparse-reward 3D visual navigation benchmark, based on~\citet{kamienny2020learning}, that combines complex exploration with high-dimensional visual inputs.
To further deepen the exploration challenge, we make our benchmarks goal-conditioned.
This requires exploring to discover information relevant to \emph{any} potential goal, rather than just a single task (e.g., locating all ingredients for \emph{any} meal vs. just the ingredients for pasta).

\begin{figure}[t]\center
    \includegraphics[width=0.75\columnwidth]{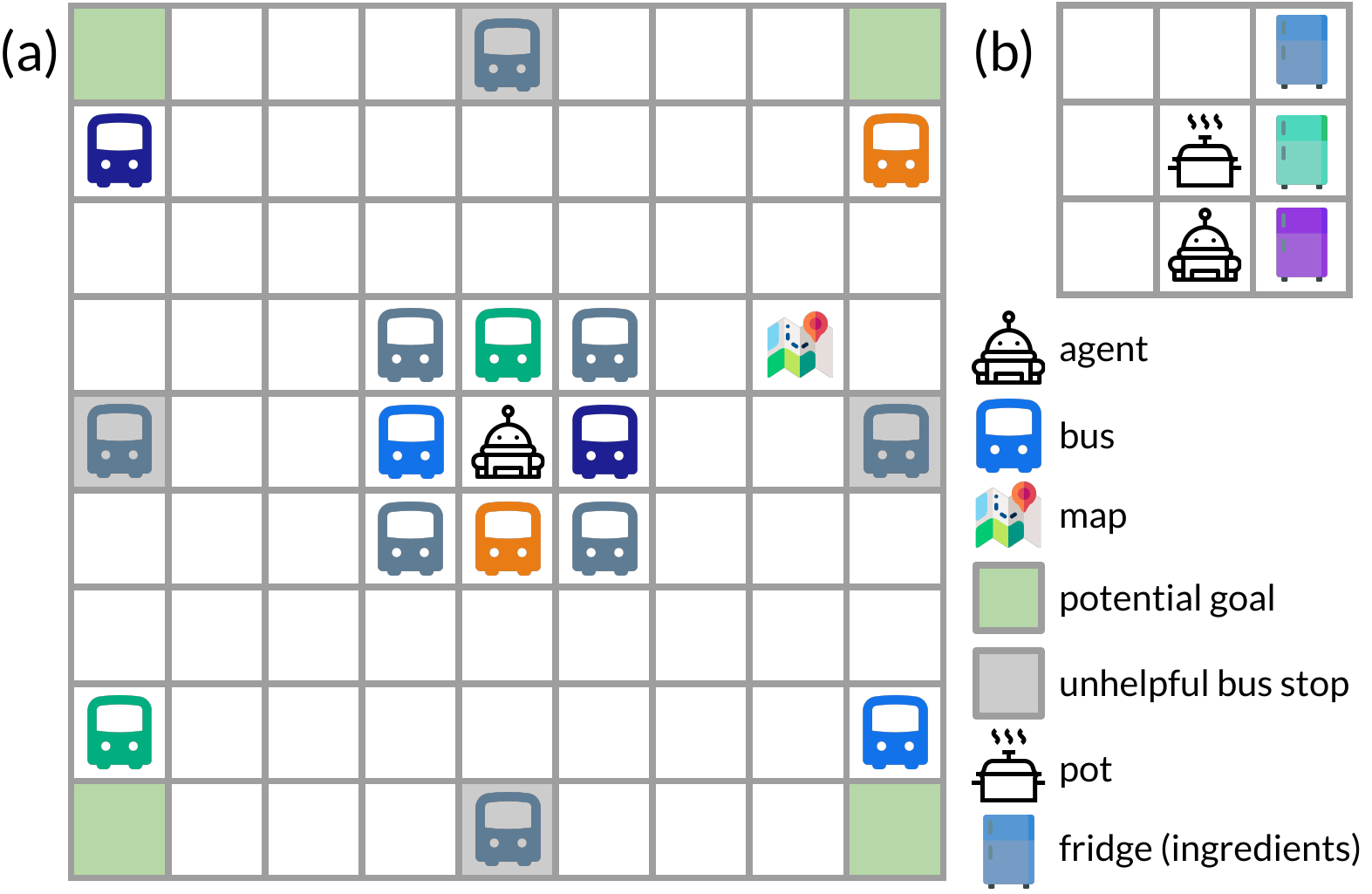}
    \vspace{-3mm}
    \caption{
        Didactic grid worlds to stress test exploration.
        (a) Navigation.
        (b) Cooking.
}\label{fig:gridworld}
    \vspace{-3mm}
\end{figure}

\textbf{Comparisons.} We compare \ours with state-of-the-art end-to-end (E-\rl~\citep{stadie2018importance}, \varibad~\citep{zintgraf2019varibad}, and \import~\citep{kamienny2020learning})
and decoupled approaches (\pearlub, an upper bound on the final performance of \pearl~\citep{rakelly2019efficient}).
For \pearlub, we analytically compute the expected rewards achieved by optimal Thompson sampling (TS) exploration, assuming access to the optimal problem-specific policy and true posterior problem distribution.
Like \ours, \import and \pearl also use the one-hot problem ID, during meta-training.
We also report the optimal returns achievable with no exploration as "No exploration."
Where applicable, all methods use the same architecture.
The full architecture and approach details are in \refapp{baseline_details}.

We report the average returns achieved by each approach in trials with one exploration and one exploitation episode,
averaged over 3 seeds with 1-standard deviation error bars (full details in \refapp{experiment_details}).
We evaluate each approach on 100 meta-testing trials, every 2K meta-training trials.
In all plots, the training timesteps includes all timesteps from both exploitation and exploration episodes in meta-training trials.

\subsection{Didactic Experiments}

\begin{figure}[t]\center
  \vspace{-3mm}
  \includegraphics[width=0.75\columnwidth]{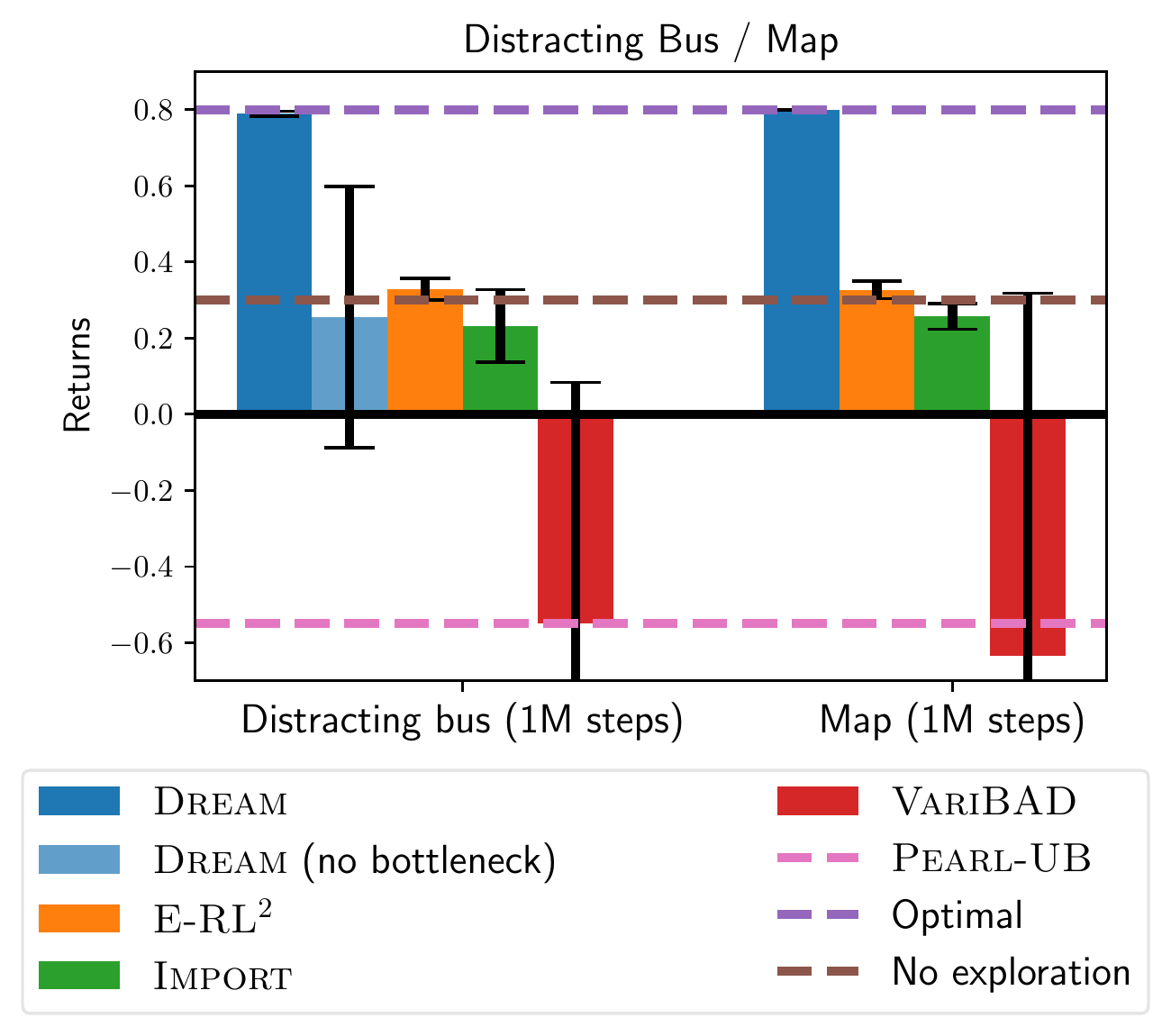}
\vspace{-5mm}
  \caption{
    \small
    Navigation results.
    Only \ours optimally explores all buses and the map.
  }
  \vspace{-1mm}
  \label{fig:distraction_results}
\end{figure}

We first evaluate on the grid worlds shown in Figures~\ref{fig:gridworld}a~and~\ref{fig:gridworld}b.
The state consists of the agent's $(x, y)$-position,
a one-hot indicator of the object at the agent's position (none, bus, map, pot, or fridge),
a one-hot indicator of the agent's inventory (none or an ingredient),
and the goal.
The actions are \emph{move} up, down, left, or right;
\emph{ride bus}, which, at a bus, teleports the agent to another bus of the same color;
\emph{pick up}, which, at a fridge, fills the agent's inventory with the fridge's ingredients;
and \emph{drop}, which, at the pot, empties the agent's inventory into the pot.
Episodes consist of 20 timesteps and the agent receives $-0.1$ reward at each timestep until the goal, described below, is met (details in \refapp{task_details}; qualitative results in \refapp{additional_results}).

\begin{figure*}[h]\center
\vspace{-2mm}
\includegraphics[width=\textwidth]{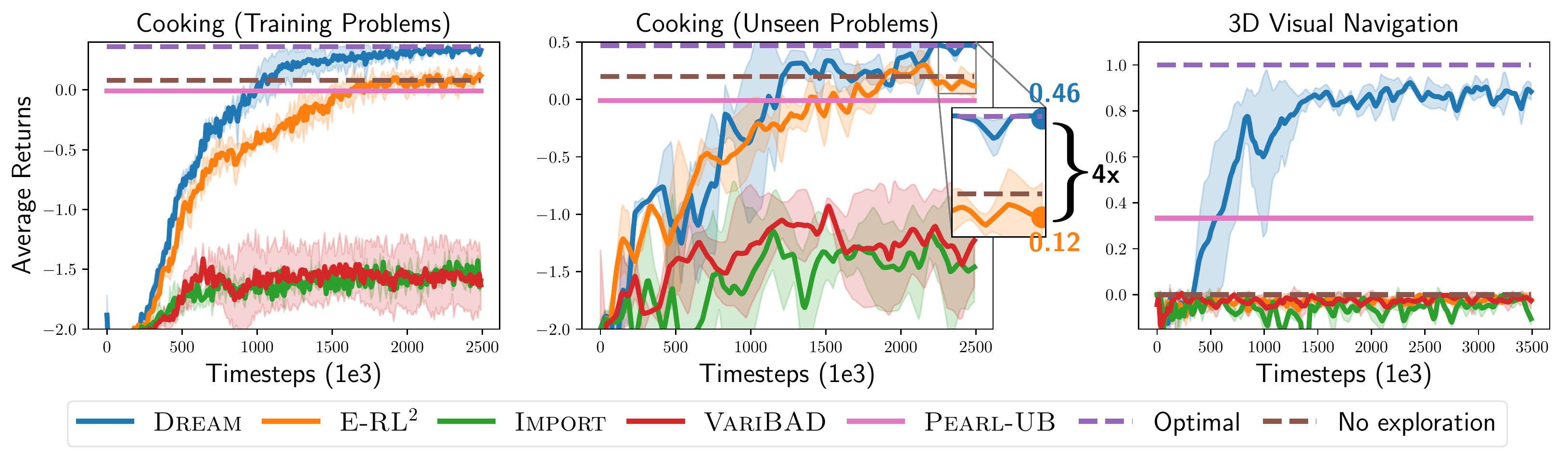}
\vspace{-9mm}
\caption{
  Cooking results:
  only \ours achieves optimal reward on training problems (left),
  and on generalizing to unseen problems (middle).
  3D visual navigation results: only \ours reads the sign and solves the task (right).
}\label{fig:cooking_results}
\vspace{-3mm}
\end{figure*}

\textbf{Targeted exploration.}
We first test if these methods can efficiently explore in the presence of distractions in two versions of the benchmark in \reffig{gridworld}a: \emph{distracting bus} and \emph{map}.
In both, there are 4 possible goals (the 4 green locations).
During each episode, a goal is randomly sampled.
Reaching the goal yields +1 reward and ends the episode.
The 4 colored buses each lead to near a different potential green goal location when ridden and
in different problems $\mdpindex$, their destinations are set to be 1 of the $4!$ different permutations.
The \emph{distracting bus} version tests if the agent can ignore distractions by including unhelpful gray buses,
which are never needed to optimally reach any goal.
In different problems, the gray buses lead to different permutations of the gray locations.
The \emph{map} version tests if the agent can leverage objects for exploration by including a map that reveals the destinations of the colored buses when touched.

Figure \ref{fig:distraction_results} shows the results after 1M steps.
\ours learns to optimally explore and thus receives optimal reward in both versions:
In \emph{distracting bus}, it ignores the unhelpful gray buses and learns the destinations of all helpful buses by riding them.
In \emph{map}, it learns to leverage informative objects by visiting the map.
During exploitation, \ours immediately reaches the goal by riding the correct colored bus.
In contrast, \import and E-\rl get stuck in a local optimum, indicative of the coupling problem (\refsec{cyclic}), which achieves the same returns as no exploration at all.
They do not explore the helpful buses or map and consequently sub-optimally exploit by just walking to the goal.
\varibad learns slower, likely because it learns a dynamics model, but eventually matches the sub-optimal returns of \import and \rl in \textasciitilde 3M steps (not shown).

\pearl achieves sub-optimal returns, even with infinite meta-training (see line for \pearlub), as follows.
TS explores by sampling a problem ID from its posterior and executing its policy conditioned on this ID.
Since for any given problem (bus configuration) and goal, the optimal problem-specific policy rides the one bus leading to the goal, TS does not explore optimally (i.e., explore all the buses or read the map), even with the optimal problem-specific policy and true posterior problem distribution.

Recall that \ours tries to discard extraneous task-irrelevant information from the problem ID with an information bottleneck that minimizes the mutual information $I(z; \mdpindex)$ between problem IDs and the encoder $F_\paramF(z \mid \mdpindex)$.
\cameraupdate{This makes exploration targeted, since \ours only explores to recover information in $z$.
We hypothesize that the bottleneck only improves exploration in domains with distracting task-irrelevant information to discard from the problem ID.
This empirically holds when we ablate the information bottleneck from \ours, plotted under \ours (no bottleneck):
In \emph{distracting bus}, \ours without the bottleneck wastes its exploration on the distracting gray unhelpful buses and consequently achieves low returns, as seen in \reffig{distraction_results} (left).
In contrast, \emph{map} and other below domains do not contain any distracting information in the problem ID.
Consistent with our hypothesis, \ours achieves comparable returns with or without the information bottleneck in these domains, as seen in \reffig{distraction_results} (right) and \reffig{cooking_results}.
}

\textbf{Generalization to new problems.}
We test generalization to unseen problems in a cooking benchmark (\reffig{gridworld}b).
The fridges each contain 1 of 4 different (color-coded) ingredients, determined by the problem ID.
The fridges' contents are unobserved until the agent uses the "pick up" action at the fridge.
Goals (recipes) specify placing 2 correct ingredients in the pot in the right order.
The agent receives positive reward for picking up and placing the correct ingredients, and negative reward for using wrong ingredients.
We hold out 1 of the $4^3 = 64$ problems for meta-testing.

\reffig{cooking_results} shows the results on training (left) and held-out (middle) problems.
Only \ours achieves near-optimal returns on both.
During exploration, it investigates each fridge with the "pick up" action, and then directly retrieves the correct ingredients during exploitation.
E-\rl gets stuck in a local optimum, only sometimes exploring the fridges.
This achieves 3.8x lower returns, only slightly higher than no exploration at all.
Here, leveraging the problem ID actually hurts \import compared to E-\rl.
\import successfully solves the task, given access to the problem ID, but fails without it.
As before, \varibad learns slowly and TS (\pearlub) cannot learn optimal exploration.

\subsection{Sparse-Reward 3D Visual Navigation}

\begin{figure}\center
    \includegraphics[width=0.75\columnwidth]{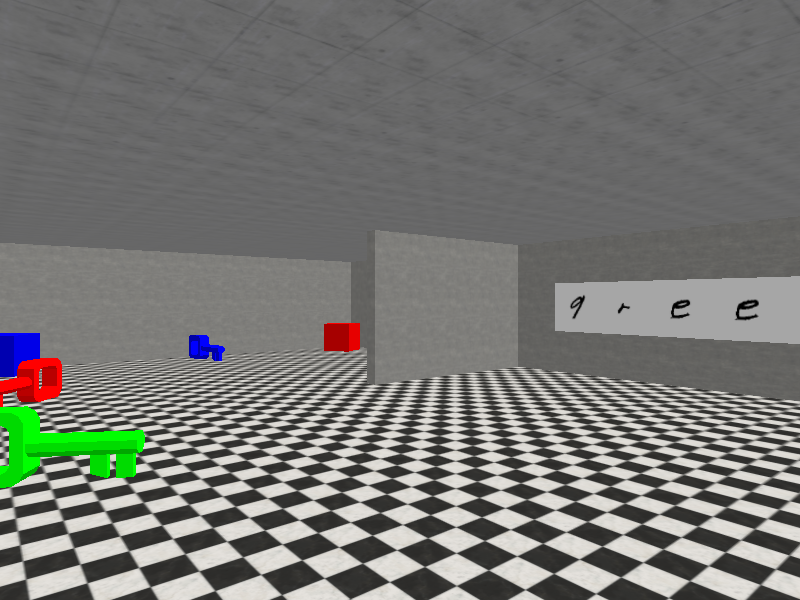}
    \vspace{-1mm}
    \caption{
        \small
        3D Visual Navigation. The agent must read the sign to determine what colored object to go to.
    }\label{fig:sign}
    \vspace{-4mm}
\end{figure}

We conclude with a challenging benchmark testing both sophisticated exploration and scalability to pixel inputs.
We modify a benchmark from~\citet{kamienny2020learning} to increase both the exploration and scalability challenge by including more objects and a visual sign, illustrated in \reffig{sign}.
In the 3 different problems, the sign on the right says ``blue'', ``red'' or ``green.''
The goals specify whether the agent should collect the key or block.
The agent receives +1 reward for collecting the correct object (color specified by the sign, shape specified by the goal), -1 reward for the wrong object, and 0 reward otherwise.
The agent begins the episode on the far side of the barrier and must walk around the barrier to visually ``read'' the sign.
The agent's observations are $80 \times 60$ RGB images and its actions are to rotate left or right, move forward, or end the episode.

\ours is the only method that learns to read the sign and achieve reward (\reffig{cooking_results} right).
All end-to-end approaches get stuck in local optima due to the chicken-and-egg coupling problem, where they do not learn to read the sign and hence stay away from all the objects, in fear of receiving negative reward.
This achieves close to $0$ returns, consistent with the results in \citet{kamienny2020learning}.
As before, \pearlub cannot learn optimal exploration.
 \section{Conclusion}

In summary, this work identifies a chicken-and-egg problem that end-to-end meta-RL approaches suffer from,
where learning good exploitation requires already having learned good exploration and vice-versa.
This creates challenging local optima, since typically neither exploration nor exploitation is good at the beginning of meta-training.
We show that appropriately leveraging simple one-hot problem IDs allows us to break this cyclic dependency with \ours.
Consequently, \ours has strong empirical performance on meta-RL problems requiring complex exploration, as well as substantial theoretical sample complexity improvements in the tabular setting.
Though prior works also leverage the problem ID and use decoupled objectives that avoid the chicken-and-egg problem, no other existing approaches can recover optimal exploration empirically and theoretically like \ours.

\textbf{Reproducibility.} Our code is publicly available at \url{https://github.com/ezliu/dream}.

 \section*{Acknowledgements}
We thank Luisa Zintgraf for her insights about \varibad.
We also thank Sahaana Suri, Suraj Nair, Minae Kwon, and Ramtin Keramati for their feedback on drafts of this paper.

We thank Arkira Chantaratananond for creating animations of the grid world tasks in the project web page.
Icons used in this paper were made by Freepik, ThoseIcons, dDara, Pixel perfect, ThoseIcons, mynamepong, Icongeek26, and Vitaly Gorbachev from \url{www.flaticon.com}.

EL is supported by a National Science Foundation Graduate Research Fellowship under Grant No. DGE-1656518.
AR is supported by a Google PhD Fellowship and Open Philanthropy Project AI Fellowship.
This work was also supported in part by Google.

\bibliography{all}
\bibliographystyle{icml2021}

\clearpage
\appendix

\section{\ours Training Details}\label{sec:training_details}

\refalg{ours} summarizes a practical algorithm for training \ours.
We parametrize
\iclrupdate{both the exploration and exploitation policies as recurrent
}deep dueling double-Q networks~\citep{wang2016dueling, van2016deep},
with exploration Q-values $\qexp(s, \tauexp, a; \paramexp)$ parametrized by $\paramexp$ (and target network parameters $\paramexp'$) and
exploitation Q-values $\qin(s, z, a; \paramin)$ parametrized by $\paramin$ (and target network parameters $\paramin'$).
We train on trials with one exploration and one exploitation episode, but can test on arbitrarily many exploitation episodes, as the exploitation policy acts on each episode independently (i.e. it does not maintain a hidden state across episodes).
Using the choices for $F_\paramF$ and $q_\paramdecoder$ in \refsec{practical_dream}, with target parameters $\paramF'$ and $\paramdecoder'$ respectively, training proceeds as follows.

We first sample a new problem for the trial and roll-out the exploration policy, adding the roll-out to a replay buffer (lines 7-9).
Then, we roll-out the exploitation policy, adding the roll-out to a separate replay buffer (lines 10-12).
We train the exploitation policy on both stochastic encodings of the problem ID $\mathcal{N}(f_\paramF(\mdpindex), \rho^2 I)$ and on encodings of the exploration trajectory $g_\paramdecoder(\tauexp)$.

Next, we sample from the replay buffers and update the parameters.
First, we sample $(s_t, a_t, r_t, s_{t + 1}, \mdpindex, \tauexp)$-tuples from the exploration replay buffer and perform a normal DDQN update on the exploration Q-value parameters $\paramexp$ using rewards computed from the decoder (lines 13-15).
Concretely, we minimize the following standard DDQN loss function w.r.t., the parameters $\paramexp$, where the rewards are computed according to Equation \ref{eq:reward-exp}:
\begin{align*}
  &\sL_\text{exp}(\paramexp) = \E\left[\norm{\qexp(s_t, \tauexp_{:t}, a_t; \paramexp) - \text{target}}^2_2\right], \\
  \text{where } &\text{target} = (r_t^\text{exp} + \gamma \qexp(s_{t + 1}, \tauexp_{:t + 1}, a_\text{DDQN}; \paramexp'), \\
  &r_t^\text{exp} = \norm{f_\paramF(\mdpindex) - g_\paramdecoder(\tauexp_{:t})}^2_2 - \\
  &\quad\quad\;\;\, \norm{f_\paramF(\mdpindex) - g_\paramdecoder(\tauexp_{:t+1})}^2_2 - \rewardpenalty, \\
  &a_\text{DDQN} = \arg\max_a \qexp(s_{t + 1}, \tauexp_{:t + 1}, a; \paramexp).
\end{align*}
We perform a similar update with the exploitation Q-value parameters (lines 16--18).
We sample $(s, a, r, s', \mdpindex, \tauexp)$-tuples from the exploitation replay buffer and perform DDQN updates from the encoding of the problem ID and from the encoding of the exploration trajectory by minimizing the following losses:
\begin{align*}
  &\sL_\text{task-id}(\paramin, \paramF) = \E\left[\norm{\qin(s, f_\paramF(\mdpindex), a; \paramin) - \text{target}_\text{id}}^2_2\right], \\
  \text{where } &\text{target}_\text{id} = (r + \qin(s', f_{\paramF'}(\mdpindex), a_\text{id}; \paramin'), \\
  &a_\text{id} = \arg\max_a \qin(s', f_\paramF(\mdpindex), a; \paramin).
  \\
\end{align*}
\begin{align*}
  &\sL_\text{task-traj}(\paramin, \paramdecoder) = \\
  &\quad\quad\quad\E\left[\norm{\qin(s, g_\paramdecoder(\tauexp), a; \paramin) - \text{target}_\text{traj}}^2_2\right], \\
  \text{where } &\text{target}_\text{traj} = (r + \qin(s', g_{\paramdecoder'}(\tauexp), a_\text{traj}; \paramin'), \\
  &a_\text{traj} = \arg\max_a \qin(s', g_\paramdecoder(\tauexp), a; \paramin), \\
\end{align*}
Finally, from the same exploitation replay buffer samples, we also update the
problem ID embedder to enforce the information bottleneck (line 19)
and the decoder to approximate the true conditional distribution (line 20) by minimizing the following losses respectively:
\begin{align*}
    &\sL_\text{bottleneck}(\paramF) = \E_\mdpindex \left[\min{(\norm{f_\paramF(\mdpindex)}^2_2, K)}\right] \\
    \text{and } &\sL_\text{decoder}(\paramdecoder) =
    \E_{\tauexp}\left[\sum_t \norm{f_\paramF(\mdpindex) - g_\paramdecoder(\tauexp_{:t})}^2_2\right].
\end{align*}
Since the magnitude $\norm{f_\paramF(\mdpindex)}^2_2$ partially determines the scale of the reward, we add a hyperparameter $K$ and only minimize the magnitude when it is larger than $K$.
Altogether, we minimize the following loss:
\iclrupdate{\begin{align*}
    \sL(\paramexp, \paramin, \paramdecoder, \paramF) = &\sL_\text{exp}(\paramexp)\;+ \\
    &\sL_\text{task-traj}(\paramin, \paramdecoder) + \sL_\text{task-id}(\paramin, \paramF)\;+ \\
    &\sL_\text{bottleneck}(\paramF) + \sL_\text{decoder}(\paramdecoder).
\end{align*}
} 

As is standard with deep Q-learning~\citep{mnih2015human}, instead of sampling from the replay buffers
and updating after each episode, we sample and perform all of these updates every 4 timesteps.
We periodically update the target networks (lines 20-22).

\begin{algorithm*}
    \small
    \begin{flushleft}
    \begin{algorithmic}[1]
        \State \textbf{Initialize} exploitation replay buffer $\sB_{\text{task}} = \{\}$ and exploration replay buffer $\sB_{\text{exp}} = \{\}$
        \State \textbf{Initialize} exploitation Q-value $\qin$ parameters $\paramin$ and target network parameters $\paramin'$
        \State \textbf{Initialize} exploration Q-value $\qexp$ parameters $\paramexp$ and target network parameters $\paramexp'$
        \State \textbf{Initialize} problem ID embedder $f_\paramF$ parameters $\paramF$ and target parameters $\paramF'$
        \State \textbf{Initialize} trajectory embedder $g_\paramdecoder$ parameters $\paramdecoder$ and target parameters $\paramdecoder'$
        \For{$\textrm{trial} = 1$ \textbf{to} $\text{max trials}$}
          \State Sample problem $\mdpindex \sim p(\mdpindex)$, defining MDP $\langle\mathcal{S}, \mathcal{A}, \sR_\mdpindex, T_\mdpindex\rangle$

\State Roll-out $\epsilon$-greedy exploration policy $\qexp(s_t, \tauexp_{:t}, a_t; \paramexp)$, producing trajectory $\tauexp = (s_0, a_0, \ldots, s_T)$.
          \State Add tuples to the exploration replay buffer $\sB_{\text{exp}} = \sB_{\text{exp}} \cup \{(s_t, a_t, r_t, s_{t + 1}, \mdpindex, \tauexp)\}_t$.
          \item[]

          \State Compute embedding $z \sim \mathcal{N}(f_\paramF(\mdpindex), \rho^2 I)$ on $\text{trial} \equiv 0 \pmod{2}$ and $z = g_\paramdecoder(\tauexp)$ on $\text{trial} \equiv 1 \pmod{2}$.
          \State Roll-out $\epsilon$-greedy exploitation policy $\qin(s_t, z, a_t; \paramin)$, producing trajectory $(s_0, a_0, r_0, \ldots)$ with $r_t = \sR_\mdpindex(s_{t+1})$.
          \State Add tuples to the exploitation replay buffer $\sB_{\text{task}} = \sB_{\text{task}} \cup \{(s_t, a_t, r_t, s_{t + 1}, \mdpindex, \tauexp)\}_t$.
          \item[]

\State Sample batches of $(s_t, a_t, r_t, s_{t + 1}, \mdpindex, \tauexp) \sim \sB_{\text{exp}}$ from exploration replay buffer.
          \State Compute reward $r_t^\text{exp} = \norm{f_\paramF(\mdpindex) - g_\paramdecoder(\tauexp_{:t})}^2_2 - \norm{f_\paramF(\mdpindex) - g_\paramdecoder(\tauexp_{:t + 1})}^2_2 - \rewardpenalty$ (Equation~\ref{eq:reward-exp}).
          \State Optimize $\paramexp$ with DDQN update with tuple $(s_t, a_t, r_t^\text{exp}, s_{t + 1})$ with $\sL_\text{exp}(\paramexp)$
\item[]

          \State Sample batches of $(s, a, r, s', \mdpindex, \tauexp) \sim \sB_{\text{task-id}}$ from exploitation replay buffer.

          \State Optimize $\paramin$ and $\paramF$ with DDQN update with tuple
          $((s, \mdpindex), a, r, (s', \mdpindex))$ with $\sL_\text{task-id}(\paramin, \paramF)$ on $\text{trial} \equiv 0 \pmod{2}$.
          \State Optimize $\paramin$ and $\paramdecoder$ with DDQN update with tuple
          $((s, \tauexp), a, r, (s', \tauexp))$ with $\sL_\text{task-traj}(\paramin, \paramdecoder)$ on $\text{trial} \equiv 1 \pmod{2}$.

          \State Optimize $\paramF$ on $\sL_\text{bottleneck}(\paramF) = \nabla_\paramF \min(\norm{f_\paramF(\mdpindex)}^2_2, K)$
          \State Optimize $\paramdecoder$ on $\sL_\text{decoder}(\paramdecoder) = \nabla_\paramdecoder \sum_t{\norm{f_\paramF(\mdpindex) - g_\paramdecoder(\tauexp_{:t})}^2_2}$ (Equation~\ref{eqn:obj-exp})
          \item[]

          \If{$\text{trial} \equiv 0 \pmod{\text{target freq}}$}
            \State Update target parameters $\paramexp' = \paramexp$, $\paramin' = \paramin$, $\paramF' = \paramF$, $\paramdecoder' = \paramdecoder$
          \EndIf
        \EndFor
    \end{algorithmic}
    \end{flushleft}
    \caption{\ours DDQN}
    \label{alg:ours}
\end{algorithm*}

\section{Experiment Details}\label{sec:experiment_details}
\subsection{Problem Details}\label{sec:task_details}
\begin{figure*}\center
\includegraphics[width=\textwidth]{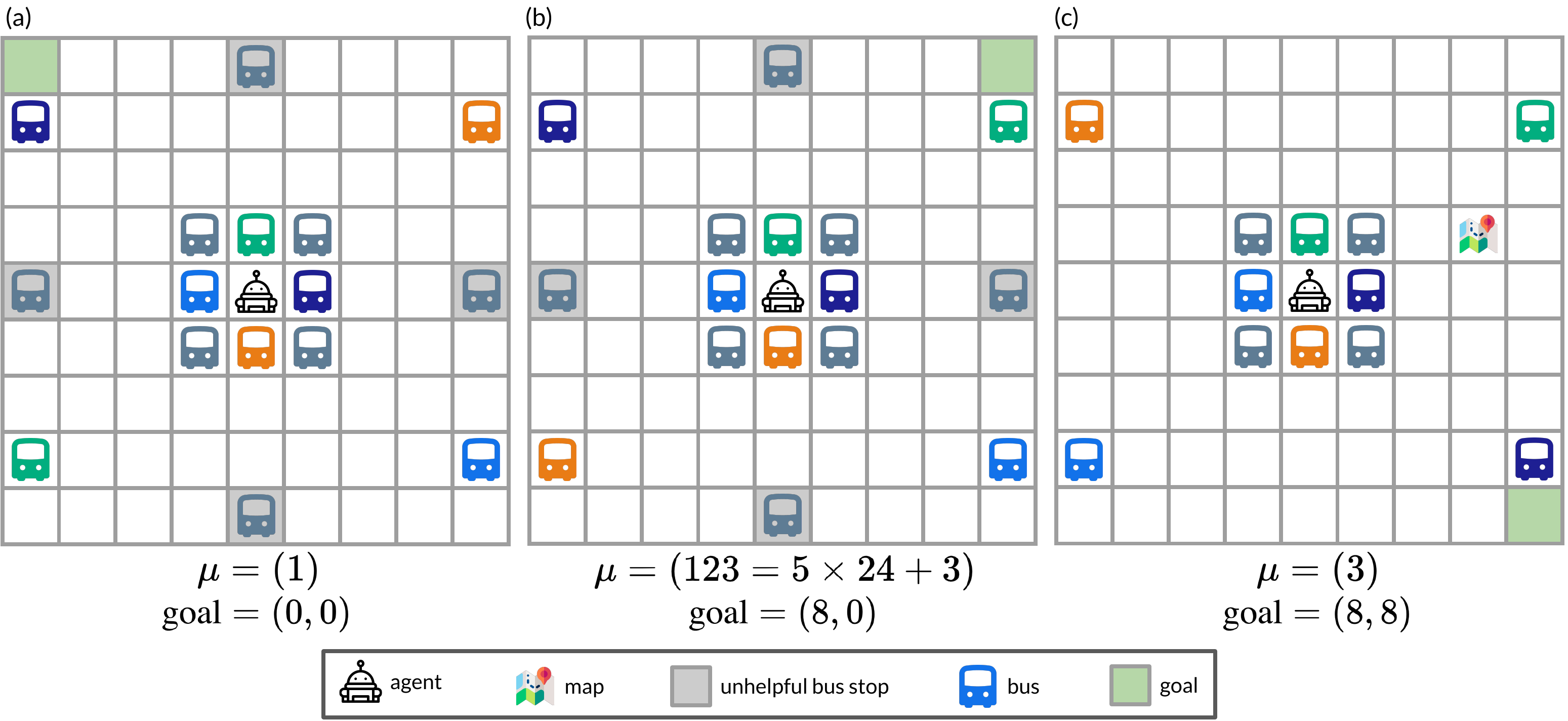}
\caption{Examples of different \emph{distracting bus} and \emph{map} problems.
(a) An example distracting bus problem.
Though all unhelpful distracting buses are drawn in the same color (gray),
the destinations of the gray buses are fixed within a problem.
(b) Another example distracting bus problem.
The destinations of the helpful colored buses are a different permutation
(the orange and green buses have swapped locations).
This takes on permutation $3 \equiv \mu \pmod{4!}$, instead of $1$.
The unhelpful gray buses are also a different permutation (not shown), taking on permutation $5 = \left \lfloor \frac{\mu}{4!} \right \rfloor$.
(c) An example map problem.
Touching the map reveals the destinations of the colored buses, by adding $\mu$ to the state observation.
}\label{fig:city_examples}
\end{figure*}

\paragraph{Distracting bus / map.}

Riding each of the four colored buses teleports the agent to near one of the green goal locations in the corners.
In different problems, the destinations of the colored buses change, but the bus positions and their destinations are fixed within each problem.
Additionally, in the distracting bus domain, the problem ID also encodes the destinations of the gray buses, which are permutations of the four gray locations on the midpoints of the sides.
More precisely, the problem ID $\mu \in \{0, 1, \ldots, 4! \times 4! = 576\}$ encodes both the permutation of the colored helpful bus destinations, indexed as $\mu \pmod{4!}$ and the permutation of the gray unhelpful bus destinations as $\left \lfloor \frac{\mu}{4!} \right \rfloor$.
We hold out most of the problem IDs during meta-training ($\frac{23}{24} \times 576 = 552$ are held-out for meta-training).

In the map domain, the problem $\mdpindex$ is an integer representing which of the 4! permutations of the four green goal locations the colored buses map to.
The states include an extra dimension, which is set to 0 when the agent is not at the map, and is set to this integer value $\mdpindex$ when the agent is at the map.
\reffig{city_examples} displays three such examples.

\paragraph{Cooking.}

\begin{figure}\center
  \begin{center}
    \includegraphics[width=\columnwidth]{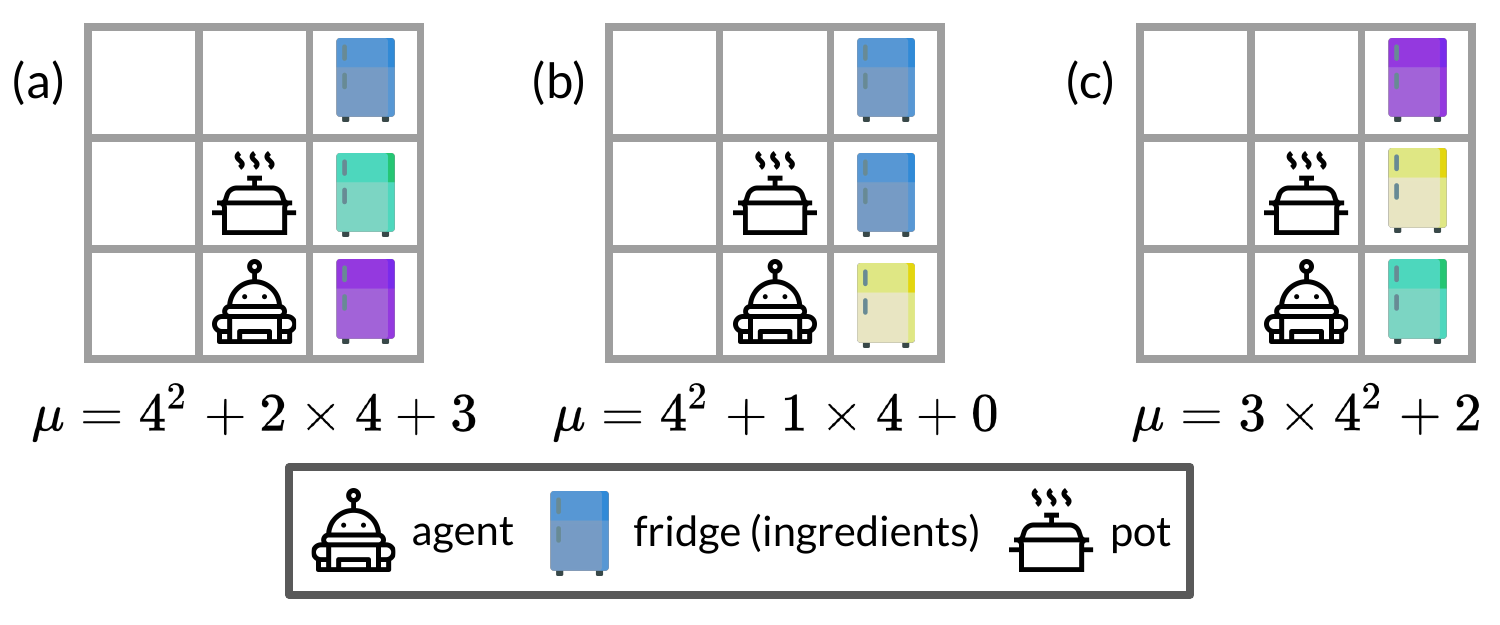}
  \end{center}
  \caption{
    Three example cooking problems.
    The contents of the fridges (color-coded) are different in different problems.
}
  \label{fig:cooking_examples}
\end{figure}

In different problems, the (color-coded) fridges contain 1 of 4 different ingredients.
The ingredients in each fridge are unknown until the goes to the fridge and uses the pickup action.
\reffig{cooking_examples} displays three example problems.
The problem ID $\mdpindex$ is an integer between $0$ and $4^3$, where $\mdpindex = 4^2a + 4b + c$ indicates that the top right fridge has ingredient $a$, the middle fridge has ingredient $b$ and the bottom right fridge has ingredient $c$.

The goals correspond to a recipe of placing the two correct ingredients in the pot in the right order.
Goals are tuples $(a, b)$, which indicate placing ingredient $a$ in the pot first, followed by ingredient $b$.
In a given problem, we only sample goals involving the recipes actually present in that problem.
During meta-training, we hold out a randomly selected problem $\mdpindex = 11$.

We use the following reward function $\sR_\mdpindex$.
The agent receives a per timestep penalty of $-0.1$ reward and receives $+0.25$ reward for completing each of the four steps:
(i) picking up the first ingredient specified by the goal;
(ii) placing the first ingredient in the pot;
(iii) picking up the second ingredient specified by the goal;
and (iv) placing the second ingredient in the pot.
To prevent the agent from gaming the reward function, e.g., by repeatedly picking up the first ingredient, dropping the first ingredient anywhere but in the pot yields a penalty of $-0.25$ reward, and similarly for all steps.
To encourage efficient exploration, the agent also receives a penalty of $-0.25$ reward for picking up the wrong ingredient.

\paragraph{Cooking without goals.}
While we evaluate on goal-conditioned benchmarks to deepen the exploration challenge, forcing the agent to discover all the relevant information for \emph{any} potential goal, many standard benchmarks~\citep{finn2017modelagnostic, yu2019meta} don't involve goals.
We therefore include a variant of the cooking task, where there are no goals.
We simply concatenate the goal (recipe) to the problem ID $\mdpindex$.
Additionally, we modify the rewards so that picking up the second ingredient yields $+0.25$ and dropping it yields $-0.25$ reward, so that it is possible to infer the recipe from the rewards.
Finally, to make the problem harder, the agent cannot pick up new ingredients unless its inventory is empty (by using the drop action), and we also increase the number of ingredients to $7$.
The results are in \refsec{additional_results}.

\paragraph{Sparse-reward 3D visual navigation.}

We implement this domain in Gym MiniWorld~\citep{boisvert2018gym}, where the agent's observations are $80 \times 60 \times 3$ RGB arrays.
There are three problems $\mdpindex = 0$ (the sign says ``blue''), $\mdpindex = 1$ (the sign says ``red''), and $\mdpindex = 2$ (the sign says ``green'').
There are two goals, represented as $0$ and $1$, corresponding to picking up the key and the box, respectively.
The reward function $\sA_\mdpindex(s, i)$ is +1 for picking up the correct colored object (according to $\mdpindex$) and the correct type of object (according to the goal) and $-1$ for picking up an object of the incorrect color or type.
Otherwise, the reward is 0.
On each episode, the agent begins at a random location on the other side of the barrier from the sign.

\subsection{Additional Results}\label{sec:additional_results}
\paragraph{Analysis of the learned policies.}
Please see \url{https://ezliu.github.io/dream/} for videos and analysis of the exploration and exploitation behavior learned by \ours and other approaches,
which is described in  the text below.

\paragraph{Distracting bus / map.}

\begin{figure*}\center
\includegraphics[width=\textwidth]{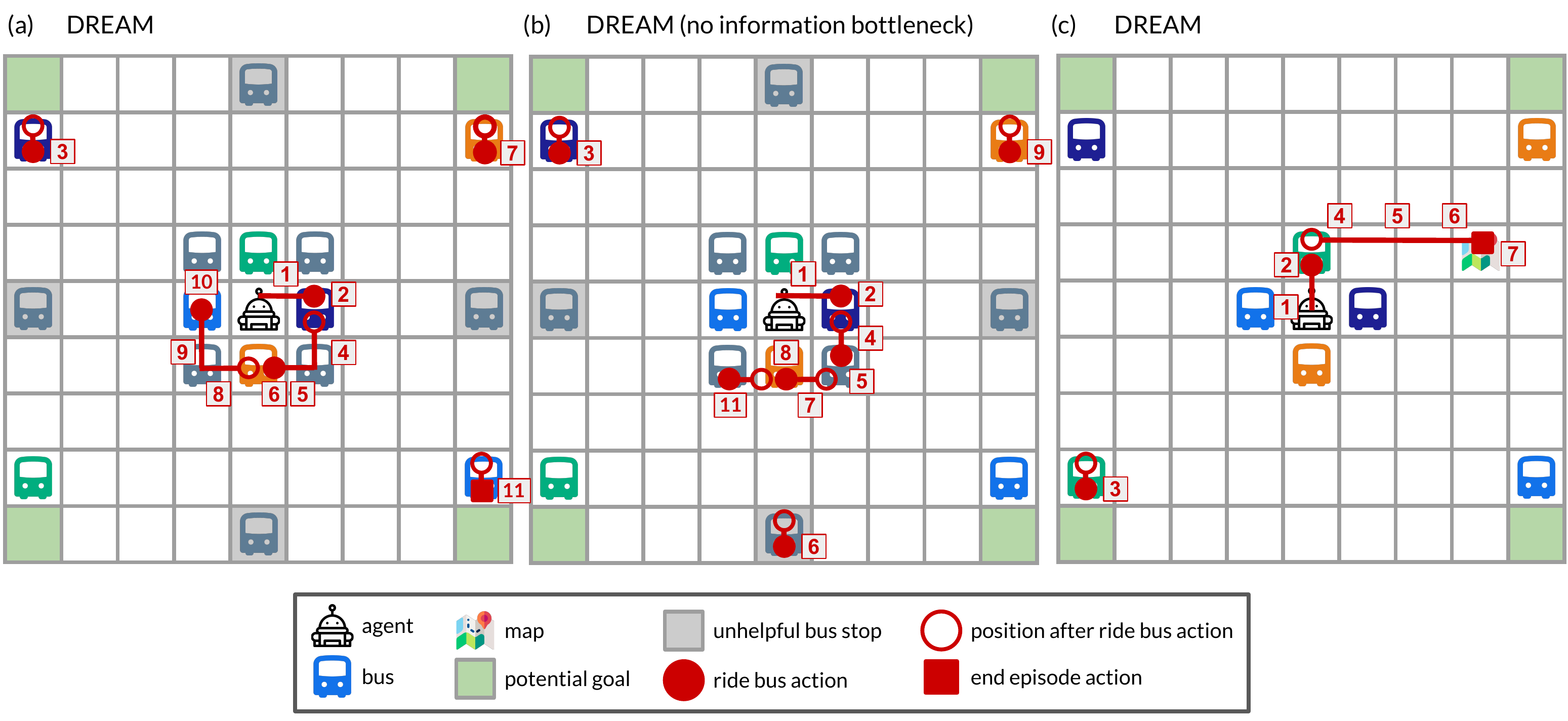}
\caption{
Examples of \ours's learned exploration behavior.
(a) \ours learns the optimal exploration behavior on the \emph{distraction} variant:
riding 3 of the 4 helpful colored buses, which allows it to infer the destinations of all colored buses and efficiently reach any goal during exploitation episodes.
(a) Without the information bottleneck, \ours also explores the unhelpful gray buses, since they are part of the problem.
This wastes exploration steps, and leads to lower returns during exploitation episodes.
(c) \ours learns the optimal exploration on the \emph{map} variant:
it goes to read the map revealing all the buses' destinations, and then ends the episode, though it unnecessarily rides one of the buses.
}\label{fig:city_qualitative_results}
\end{figure*}

\reffig{city_qualitative_results} shows the exploration policy \ours learns on the distracting bus and map domains.
With the information bottleneck, \ours optimally explores by riding 3 of the 4 colored buses and inferring the destination of the last colored bus (\reffig{city_examples}a).
Without the information bottleneck, \ours explores the unhelpful gray buses and runs out of time to explore all of the colored buses, leading to lower reward (\reffig{city_examples}b).
In the map domain, \ours optimally explores by visiting the map and terminating the exploration episode.
In contrast, the other methods (\rl, \import, \varibad) rarely visit the colored buses or map during exploration and consequently walk to their destination during exploitation, which requires more timesteps and therefore receives lower returns.

In \reffig{tsne}, we additionally visualize the exploration trajectory encodings $g_\paramdecoder(\tauexp)$ and problem ID encodings $f_\paramF(\mdpindex)$ that \ours learns in the distracting bus domain by applying t-SNE~\citep{maaten2008visualizing}.
We visualize the encodings of all possible problem IDs as dots.
They naturally cluster into 4! = 24 clusters, where the problems within each cluster differ only in the destinations of the gray distracting buses, and not the colored buses.
Problems in the support of the true posterior $p(\mdpindex \mid \tauexp)$ are drawn in green, while problems outside the support
(e.g., a problem that specifies that riding the green bus goes to location $(0, 1)$ when it has already been observed in $\tauexp$ that riding the orange bus goes to location $(0, 1)$)
are drawn in red.
We also plot the encoding of the exploration trajectory $\tauexp$ so far as a blue cross and the mean of the green clusters as a black square.
We find that the encoding of the exploration trajectory $g_\paramdecoder(\tauexp)$ tracks the mean of the green clusters until the end of the exploration episode, when only one cluster remains, and the destinations of all the colored buses has been discovered.
Intuitively, this captures uncertainty in what the potential problem ID may be.
More precisely, when the decoder is a Gaussian, placing $g_\paramdecoder(\tauexp)$ at the center of the encodings of problems in the support exactly minimizes Equation~\ref{eqn:obj-exp}.

\begin{figure*}\center
\includegraphics[width=\textwidth]{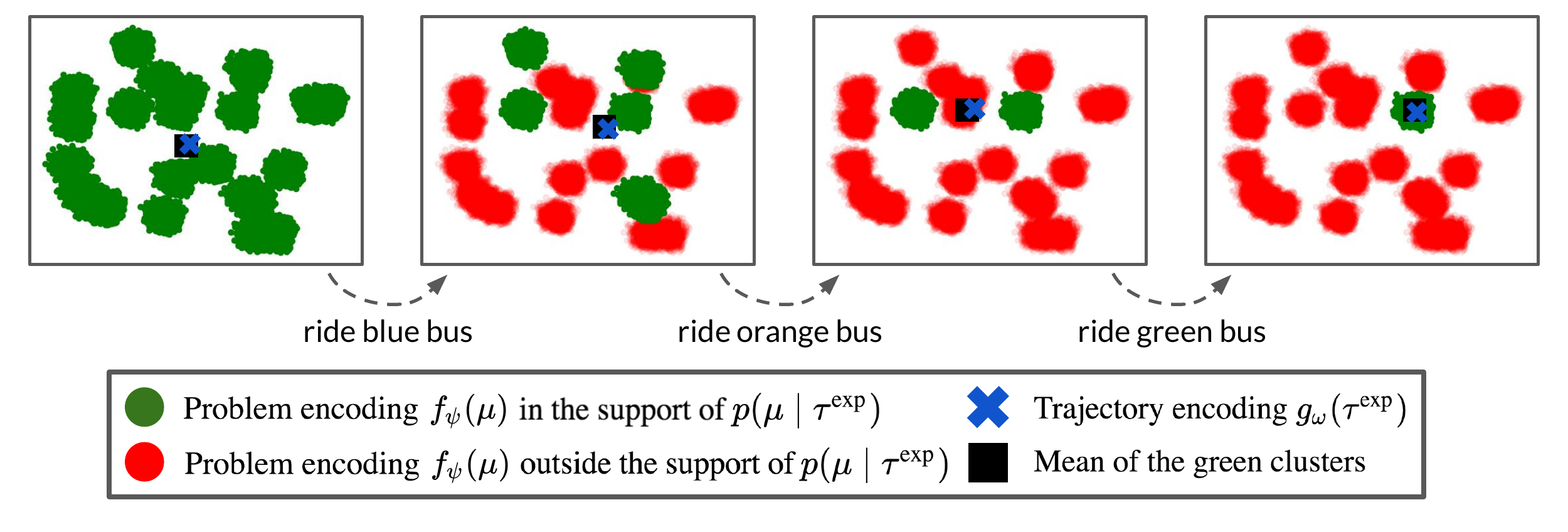}
\caption{\ours's learned encodings of the exploration trajectory and problems visualized with t-SNE~\citep{maaten2008visualizing}.
}\label{fig:tsne}
\end{figure*}

\begin{figure}\center
  \includegraphics[width=0.85\columnwidth]{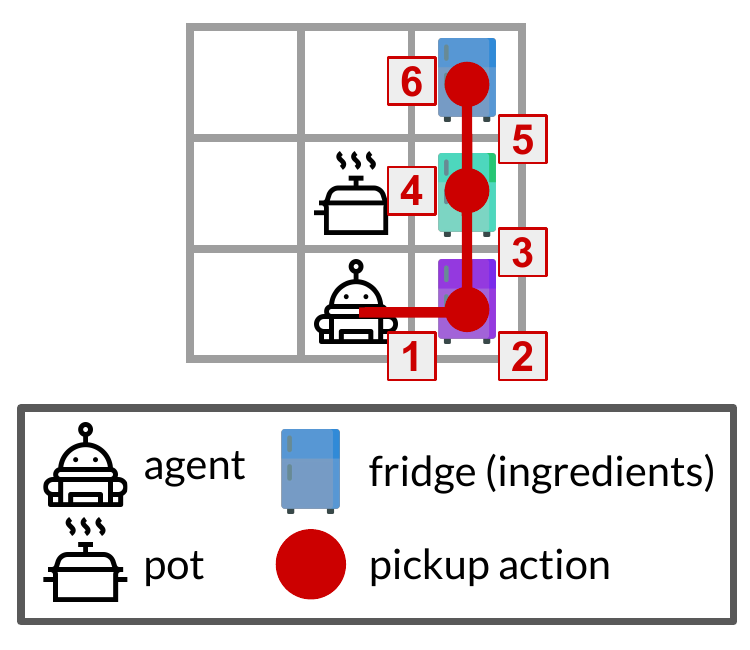}
  \captionof{figure}{
    \ours learns the optimal exploration policy, which learns the fridges' contents by going to each fridge and using the pickup action.
  }
  \label{fig:cooking_qualitative_results}
\end{figure}

\begin{figure}[t]
  \centering
  \includegraphics[width=0.85\columnwidth]{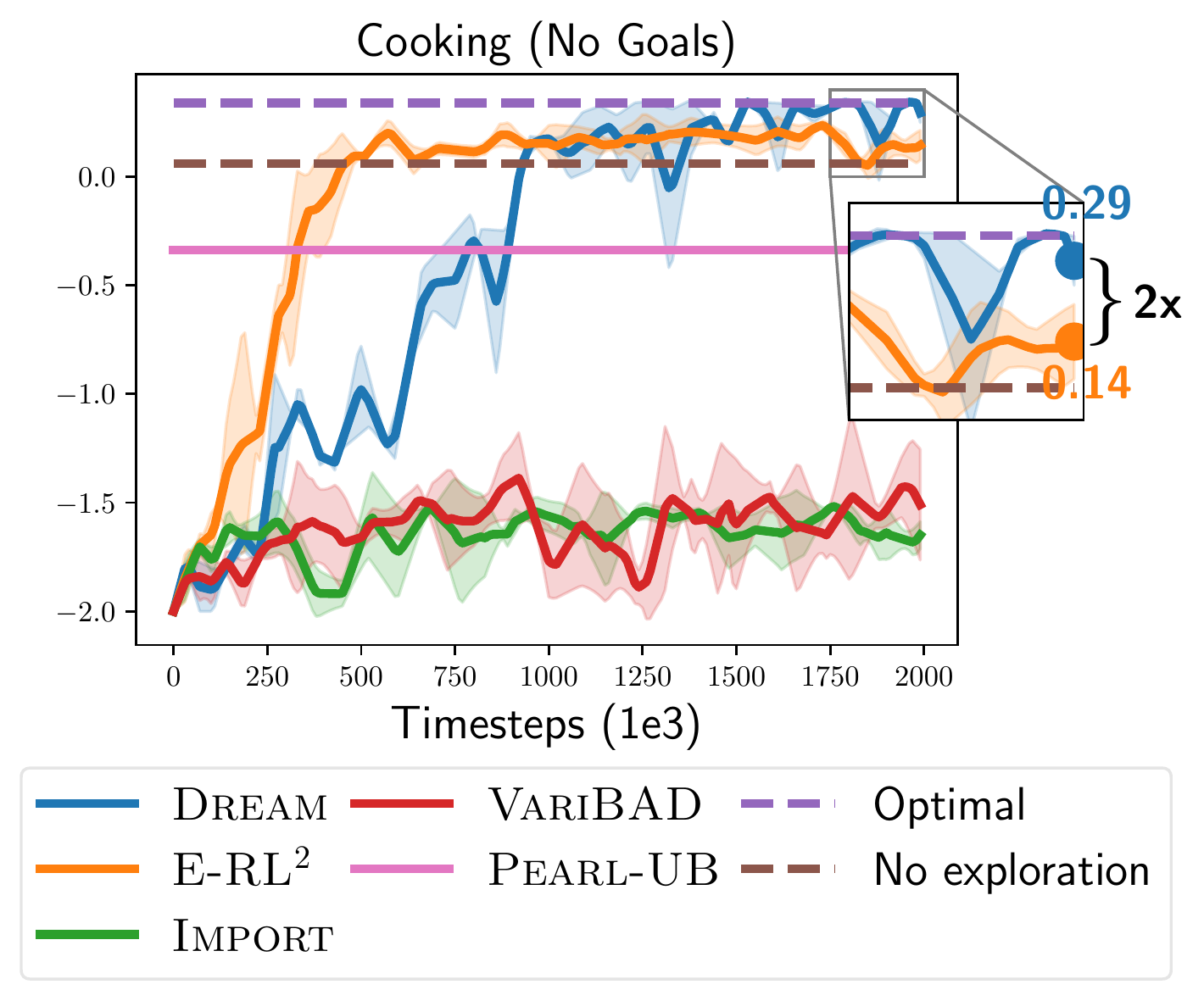}
  \captionof{figure}{Cooking without goals results. Only \ours learns the optimal policy, achieving \textasciitilde 2x more reward than the next best approach.}
  \label{fig:cooking_without_goals}
\end{figure}

\paragraph{Cooking.}

\reffig{cooking_qualitative_results} shows the exploration policy \ours learns on the cooking domain, which visits each of the fridges and investigates the contents with the "pickup" action.
In contrast, the other methods rarely visit the fridges during exploration, and instead determine the locations of the ingredients during exploitation, which requires more timesteps and therefore receives lower returns.

\paragraph{Cooking without goals.}
We provide additional results in the case where the cooking domain is modified to not include goals
(see \refsec{task_details}).
The results are summarized in \reffig{cooking_without_goals} and show the same trends as the results in original cooking benchmark.
\ours learns to optimally explore by investigating the fridges, and then also optimally exploits, by directly collecting the relevant ingredients.
The next best approach E-\rl, only sometimes explores the fridges, again getting stuck in a local optimum, yielding only slightly higher reward than no exploration at all.

\paragraph{Sparse-reward 3D visual navigation.}

\ours optimally explores by walking around the barrier and reading the sign.
See \url{https://ezliu.github.io/dream/} for videos.
The other methods do not read the sign at all and therefore cannot solve the problem.

\paragraph{Robustness to imperfections in the problem ID.}
Recall that the problem ID is a simple and easy-to-provide unique one-hot for each problem.
We test of \ours is robust to imperfections in the problem ID by assigning each problem in the \emph{map} benchmark $3$ different problem IDs.
When a problem is sampled during meta-training, it is randomly labeled with 1 of these $3$ different problem IDs.
We find that this imperfection in the problem ID does not impact \ours's final performance at all:
it still achieves optimal exploitation returns of $0.8$ after $1$M time steps of training.

\cameraupdate{\paragraph{Robustness to hyperparameters.} \ours uses 3 additional hyperparameters on top of standard RL algorithm hyperparameters:
the information bottleneck weight $\lambda$, encoder and decoder variance $\rho$, and per timestep exploration penalty $c$.
We test how sensitive \ours is to these hyperparameter values by evaluating \ours with 3 different values for each of these hyperparameters, while holding the other hyperparameter values constant, set to  the values described in \refsec{hyperparams}.
Specifically, we evaluate the following values:
$\lambda = [0.1, 1, 3], \rho = [0.01, 0.1, 0.3], c = [0, 0.01, 0.1]$.
Across all four domains (\emph{distracting bus}, \emph{map},  \emph{cooking}, and \emph{3D visual navigation}), \ours achieves near-optimal returns for all values except $\lambda = 3$, suggesting that \ours is fairly robust to hyperparameter choices.
}

\subsection{Other Approaches and Architecture Details}\label{sec:baseline_details}
In this section, we detail the loss functions that E-\rl, \import, and \varibad optimize, as well as the model architectures used in our experiments.
Where possible, we use the same model architecture for all methods:
\ours, E-\rl, \import, and \varibad.
All approaches are implemented in PyTorch~\citep{paszke2017automatic}, using a DQN implementation adapted from \citet{liu2020learning} and code adapted from \citet{liu2020imitation}.

\paragraph{State and problem ID embeddings.}
All approaches use the same method to embed the state and problem ID.
For these embeddings, we embed each dimension independently with an embedding matrix of output dimension 32.
Then, we concatenate the per-dimension embeddings and apply two linear layers with output dimensions 256 and 64 respectively, with ReLU activations.

In the 3D visual navigation task, we use a different embedding scheme for the states, as they are images.
We apply 3 CNN layers, each with 32 output layers and stride length 2, and with kernel sizes of 5, 5, and 4 respectively.
We apply ReLU activations between the CNN layers and apply a final linear layer to the flattened output of the CNN layers, with an output dimension of 128.

All state and problem ID embeddings below use this scheme.

\paragraph{Experience embeddings.}
E-\rl, \import, \varibad and the exploration and exploitation policies in \ours also learn an embedding of the history of prior experiences $\tau=(s_0, a_0, r_0, s_1, \ldots)$ and current state $s_T$.
To do this, we first separately embed each $(s_{t + 1}, a_t, r_t, d_t)$-tuple, where $d_t$ is an episode termination flag (true if the episode ends on this experience, and false otherwise), as follows:
\begin{itemize}
    \item Embed the state $s_t$ as $e(s_t)$, using the state embedding scheme described above.
    \item Embed the action $a_t$ as $e(a_t)$ with an embedding matrix of output dimension 16.
    We set $a_{-1}$ to be 0.
    \item Embed the rewards with a linear layer of output dimension 16.
We set $r_{-1}$ to be 0.
    \item Embed the episode termination $d_t$ as $e(d_t)$ with an embedding matrix of output dimension 16.
    Note that $d$ is true during all episode terminations within a trial for \rl, \import, and \varibad.
\end{itemize}
Then, we apply a final linear layer with output dimension 64 to the concatenation of the above
$[e(s_t); e(a_t); e(r_t); d_t]$.
Finally, to obtain an embedding of the entire history $\tau$, we embed each experience separately as above, and then pass an LSTM with hidden dimension 64 over the experience embeddings, where the initial hidden and cell states are set to be 0-vectors.

\paragraph{\ours.}
For the decoder $g_\paramdecoder(\tauexp=(s_0, a_0, r_0, s_1, \ldots, s_T))$, we embed each transition $(s_t, a_t, r_t, s_{t + 1})$ of the exploration trajectory $\tauexp$ using the same embedding scheme as above, except we also embed the next state $s_{t + 1}$.
Then, given embeddings for each transition, we embed the entire trajectory by passing an LSTM with output dimension 128 on top of the transition embeddings, followed by two linear layers of output dimension 128 and 64 with ReLU activations.

For the exploitation policy Q-values $\qin_\paramin(a_t \mid s_t, \tau_{:t}, z)$,
during meta-testing, we choose $z$ to be the decoder embedding of the exploration trajectory $g_\paramdecoder(\tauexp)$, and during meta-training, we choose $z$ to be the the embedding of the problem ID $e_\paramin(\mdpindex)$.
To embed the history $\tau_{:t} = (s_0, a_0, r_0, \ldots, s_{t - 1}, a_{t - 1}, r_{t - 1})$, we use the experience embedding scheme described above to obtain an embedding $e(\tau_{:t})$, omitting the reward embeddings for simplicity.
We embed the state with a learned embedding functions $e(s)$.
Then we apply a linear layer of output dimension 64 to the concatenation of $[e(s_t); e(\tau_{:t}), z]$ with a ReLU activation.
Finally, we apply two linear layer heads of output dimension 1 and $|\sA|$ respectively to form estimates of the value and advantage functions, using the dueling Q-network parametrization.
To obtain Q-values, we add the value function to the advantage function, subtracting the mean of the advantages.

For the exploration policy Q-values $\qexp_\paramexp(a_t \mid s_t, \tauexp_{:t})$,
we embed the $s_t$ and $\tauexp_{:t}$ according to the embedding scheme above.
Then, we apply two linear layer heads to obtain value and advantage estimates as above.

\paragraph{E-\rl.}
E-\rl learns a policy $\pi(a_t \mid s_t, \tau_{:t})$ producing actions $a_t$ given the state $s_t$ and history $\tau_{:t}$.
Like with all approaches, we parametrize this with dueling double Q-networks, learning Q-values
$\hat{Q}(s_t, \tau_{:t}, a_t)$.
We embed the current state $s_t$ and history $\tau_{:t}$ using the embedding scheme described above (with episode termination embeddings).
Then, we apply two final linear layer heads to obtain value and advantage estimates.

\paragraph{\import}
\import also learns a recurrent policy $\pi(a_t \mid s_t, z)$, but conditions on the embedding $z$, which is either an embedding of the problem $\mdpindex$ or the history $\tau_{:t}$.
We also parametrize this policy with dueling double Q-networks, learning Q-values $\hat{Q}(s_t, z, a_t)$.
We embed the state $s_t$ as $e(s_t)$, the problem $\mdpindex$ as $e_\phi(\mdpindex)$ and the history $\tau_{:t}$ as $e_\theta(\tau_{:t})$ using the previously described embedding schemes.
Then we alternate meta-training trials between choosing $z = e_\phi(\mdpindex)$ and $z = e_\theta(\tau_{:t})$.
We apply a linear layer of output dimension 64 to the concatenation $[e(s_t); z]$ with ReLU activations and then apply two linear layer heads to obtain value and advantage estimates.

Additionally, \import uses the following auxiliary loss function to encourage the history embedding $e_\theta(\tau_{:t})$ to be close to the problem embedding $e_\phi(\mdpindex)$ (optimized only w.r.t., $\theta$):
\begin{align*}
    \sL_{\import}(\theta) = \beta \E_{(\tau, \mdpindex)}\left[\sum_t \norm{e_\theta(\tau_{:t}) - e_\phi(\mdpindex)}^2_2\right],
\end{align*}
where $\tau$ is a trajectory from rolling out the policy on problem $\mdpindex$.
Following \citet{kamienny2020learning}, we use $\beta = 1$ in our final experiments, and found that performance changed very little when we experimented with other values of $\beta$.

\paragraph{\varibad.}
\varibad also learns a recurrent policy $\pi(a_t \mid z)$, but over a \emph{belief state} z capturing the history $\tau_{:t}$ and current state $s_t$.
We also parametrize this dueling double Q-networks, learning Q-values $\hat{Q}(s_t, z, a_t)$.

\varibad encodes the belief state with an encoder $\text{enc}(z \mid s_t, \tau{:t})$.
Our implementation of this encoder embeds $s_t$ and $\tau_{:t}$ using the same experience embedding approach as above, and use the output as the mean $m$ for a distribution.
Then, we set $\text{enc}(z \mid s_t, \tau{:t}) = \sN(m, \nu^2I)$, where $\nu^2 = 0.00001$.
We also tried learning the variance instead of fixing it to $\nu^2I$ by applying a linear head to the output of the experience embeddings, but found no change in performance, so stuck with the simpler fixed variance approach.
Finally, after sampling $z$ from the encoder, we also embed the current state $s_t$ as $e(s_t)$ and apply a linear layer of output dimension 64 to the concatenation $[e(s_t); z]$.
Then, we apply two linear layer heads to obtain value and advantage estimates.

\varibad does not update the encoder via gradients through the policy.
Instead, \varibad jointly trains the encoder with state decoder $\hat{T}(s' \mid a, s, z)$ and reward decoder $\hat{\sR}(s' \mid a, s, z)$, where $z$ is sampled from the encoder, as follows.
Both decoders embed the action $a$ as $e(a)$ with an embedding matrix of output dimension 32 and embed the state $s$ as $e(s)$.
Then we apply two linear layers with output dimension 128 to the concatenation $[e(s); e(a); z]$.
Finally, we apply two linear heads, one for the state decoder and one for the reward decoder and take the mean-squared error with the true next state $s'$ and the true rewards $r$ respectively.
In the 3D visual navigation domain, we remove the state decoder, because the state is too high-dimensional to predict.
Note that \citet{zintgraf2019varibad} found better results when removing the state decoder in all experiments.
We also tried to remove the state decoder in the grid world experiments, but found better performance when keeping the state decoder.
We also found that \varibad performed better without the KL loss term, so we excluded that for our final experiments.

\subsection{Hyperparameters}\label{sec:hyperparams}
\begin{table}[]
    \centering
    \begin{tabular}{cc}
        Hyperparameter & Value \\
        \toprule
        Discount Factor $\gamma$ & 0.99 \\
        Test-time $\epsilon$ & 0 \\
        Learning Rate & 0.0001 \\
        Replay buffer batch size & 32 \\
        Target parameters syncing frequency & 5000 updates \\
        Update frequency & 4 steps \\
        Grad norm clipping & 10
    \end{tabular}
    \caption{Hyperparameters shared across all methods: \ours, \rl, \import, and \varibad.}
    \label{tab:hyperparameters}
\end{table}

In this section, we detail the hyperparameters used in our experiments.
Where possible, we used the default DQN hyperparameter values from \citet{mnih2015human}.
and shared the same hyperparameter values across all methods for fairness.
We optimize all methods with the Adam optimizer~\citep{kingma2015adam}.
\reftab{hyperparameters} summarizes the shared hyperparameters used by all methods and we detail any differences in hyperparameters between the methods below.

All methods use a linear decaying $\epsilon$ schedule for $\epsilon$-greedy exploration.
For \rl, \import, and \varibad, we decay $\epsilon$ from $1$ to $0.01$ over $500000$ steps.
For \ours, we split the decaying between the exploration and exploitation policies.
We decay each policy's $\epsilon$ from $1$ to 0.01 over 250000 steps.

We train the recurrent policies (\ours's exploration and exploitation policies, \rl, \import, and \varibad) with a simplified version of the methods in \citet{kapturowski2019recurrent} by storing a replay buffer with up to 16000 sequences of 50 consecutive timesteps.
We decrease the maximum size from 16000 to 10000 for the 3D visual navigation experiments in order to fit inside a single NVIDIA GeForce RTX 2080 GPU.

For \ours, we additionally use per timestep exploration reward penalty $c = 0.01$,
decoder and stochastic encoder variance $\rho^2 = 0.1$,
and information bottleneck weight $\lambda = 1$.
\iclrupdate{Note that this information bottleneck weight $\lambda$ could be adapted via dual gradient descent to solve the constrained optimization problem in \refeqn{bottleneck-problem}, but we find that dynamically adjusting $\lambda$ is not necessary for good performance.
} For the MiniWorld experiments, we use $c = 0$.

\section{Analysis}\label{sec:proofs}
\subsection{Consistency}\label{sec:consistency_proof}

We restate the consistency result of \ours (\refsec{consistency}) and prove it below.

\consistency*

\begin{proof}
Recall that $\piin_\star$ and $F_\star(z \mid \mdpindex)$ are optimized to solve the constrained optimization in \refeqn{bottleneck-problem}.
In particular, they must satisfy the constraint, so  $\piin_\star$ achieves the desired expected returns conditioned on the stochastic encoding of the problem $F_\star(z \mid \mdpindex)$:
\begin{equation*}
  \E_{\iclrupdate{\mdpindex \sim p(\mdpindex),} z \sim F_\star(z \mid \mdpindex)}\left[V^{\piin_\star}(z; \mdpindex)\right]
  = \E_{\mdpindex \sim p(\mdpindex)}\left[V^{*}(\mdpindex) \right],
\end{equation*}
where $V^{\piin_\star}(z; \mdpindex)$ is the expected returns of $\piin_\star$ on problem $\mdpindex$ given embedding $z$.
Therefore, it suffices to show that the distribution over $z$ from the decoder $q_\star(z \mid \tauexp)$ is equal to the distribution from the encoder $F_\star(z \mid \mdpindex)$ for all exploration trajectories in the support of $\piexp(\tauexp \mid \mdpindex)$\footnote{We slightly abuse notation to use $\piexp(\tauexp \mid \mdpindex)$ to denote the distribution of exploration trajectories $\tauexp$ from rolling out $\piexp$ on problem $\mdpindex$.}, for each problem $\mdpindex$.
Then,
\begin{align*}
  &\E_{\mdpindex \sim p(\mdpindex), \tauexp \sim \piexp_\star, z \sim q_\star(z \mid \tauexp)}\left[V^{\piin_\star}(z; \mdpindex) \right] \\
  &\quad = \E_{\iclrupdate{\mdpindex \sim p(\mdpindex),} z \sim F_\star(z \mid \mdpindex)}\left[V^{\piin_\star}(z; \mdpindex)\right] \\
  &\quad = \E_{\mdpindex \sim p(\mdpindex)}\left[V^{*}(\mdpindex) \right]
\end{align*}
as desired.
We show that this occurs as follows.

Given stochastic encoder $F_\star(z \mid \mdpindex)$, exploration policy $\piexp_\star$ maximizes $I(\tauexp; z) = H(z) - H(z \mid \tauexp)$ (Equation~\ref{eqn:obj-exp}) by assumption.
Since only $H(z \mid \tauexp)$ depends on $\piexp_\star$, the exploration policy outputs trajectories that minimize
\begin{align*}
  &H(z \mid \tauexp) \\
  &\quad = \E_{\mdpindex \sim p(\mdpindex)}\left[
    \E_{\tauexp \sim \piexp(\tauexp \sim \mdpindex)}\left[
      \E_{z \sim F_\star(z \mid \mdpindex)}\left[-\log p(z \mid \tauexp)\right]
    \right]
  \right] \\
  &\quad = \E_{\mdpindex \sim p(\mdpindex)}\left[
    \E_{\tauexp \sim \piexp(\tauexp \sim \mdpindex)}\left[
      H(F_\star(z \mid \mdpindex), p(z \mid \tauexp))
    \right]
  \right],
\end{align*}
where $p(z \mid \tauexp)$ is the true conditional distribution and $H(F_\star(z \mid \mdpindex), p(z \mid \tauexp))$ is the cross-entropy.
The cross-entropy is minimized when $p(z \mid \tauexp) = F_\star(z \mid \mdpindex)$, which can be achieved with long enough exploration trajectories $T$ if $\langle \mathcal{S}, \mathcal{A}, \mathcal{R}_\mdpindex, \mathcal{T}_\mdpindex \rangle$ is ergodic (by visiting each transition sufficiently many times).
Optimized over an expressive enough function class, $q_\star(z \mid \tauexp)$ equals the true conditional distribution $p(z \mid \tauexp)$ at the optimum of Equation~\ref{eqn:obj-exp}, which equals $F_\star(z \mid \mdpindex)$ as desired.
\end{proof}

\subsection{Tabular Example}\label{sec:tabular_example}

We first formally detail a more general form of the simple tabular example in \refsec{sample_complexity}, where episodes are horizon $H$ rather than 1-step bandit problems.
Then we prove sample complexity bounds for \rl and \ours, with $\epsilon$-greedy tabular Q-learning with $\epsilon = 1$, i.e., uniform random exploration.

\paragraph{Setting.}
We construct this horizon $H$ setting so that taking a \emph{sequence} of $H$ actions $\actions_\star$ (instead of a single action as before) in the exploration episode leads to a trajectory $\tauexp_\star$ that reveals the problem $\mdpindex$;
all other action sequences $\actions$ lead to a trajectory $\tauexp_\actions$ that reveals no information.
Similarly, the problem $\mdpindex$ identifies a unique sequence of $H$ actions
$\actions_\mdpindex$ that receives reward $1$ during exploitation, while all other action sequences receive reward $0$.
Again, taking the action sequence $\actions_\star$ during exploration is therefore necessary and sufficient to obtain optimal reward $1$ during exploitation.

We formally construct this setting by considering a family of episodic MDPs $\langle \sS, \sA, \sR_\mdpindex, T_\mdpindex \rangle$ parametrized by the problem ID $\mdpindex \in \problems$, where:
\begin{itemize}
  \item Each exploitation and exploration episode is horizon $H$.
  \item The action space $\sA$ consists of $A$ discrete actions $\{1, 2, \ldots, A\}$.
  \item The space of problems $\problems = \{1, 2, \ldots, |\sA|^H\}$ and the distribution $p(\mdpindex)$ is uniform.
\end{itemize}

To reveal the problem via the optimal action sequence $\actions_\star$ and to allow $\actions_\mdpindex$ to uniquely receive reward, we construct the state space and deterministic dynamics as follows.
\begin{itemize}
  \item States $s \in \sS$ are $(H + 2)$-dimensional and the deterministic dynamics are constructed so the first index represents the current timestep $t$, the middle $H$ indices represent the history of actions taken, and the last index reveals the problem ID if $\actions_\star$ is taken.
    The initial state is the zero vector $s_0 = \bzero$ and we denote the state at the $t$-th timestep $s_t$ as $(t, a_0, a_1, \ldots, a_{t - 1}, 0, \ldots, 0, 0)$.
  \item The optimal exploration action sequence $\actions_\star$ is set to be taking action $|\sA|$ for $H$ timesteps.
    In problem $\mdpindex$ taking action $a_{H - 1} = 1$ at state
    $s_{H - 1} = (H - 1, a_0 = 1, \ldots, a_{H - 2} = 1, 0, 0)$
    (i.e., taking the entire action sequence $\actions_\star$) transitions to the state
    $s_H = (H, a_0 = 1, \ldots, a_{H - 2} = 1, a_{H - 1} = 1, \mdpindex)$,
    revealing the problem $\mdpindex$.
  \item The action sequence $\actions_\mdpindex$ identified by the problem $\mdpindex$ is set as the problem $\mdpindex$ in base $|\sA|$:
    i.e., $\actions_\mdpindex$ is a sequence of $H$ actions $(a_0, a_1, \ldots, a_{H - 1})$ with $\sum_{t = 0}^{H - 1} a_t |\sA|^t = \mdpindex$.
    In problem $\mdpindex$ with $\actions_\mdpindex = (a_0, a_1, \ldots, a_{H - 1})$,
    taking action $a_{H - 1}$ at timestep $H - 1$ at state
    $s_{H - 1} = (H - 1, a_0, a_1, \ldots, a_{H - 2}, 0, 0)$
    (i.e., taking the entire action sequence $\actions_\mdpindex$) yields
    $\sR_\mdpindex(s_{H - 1}, a_{H - 1}) = 1$.
    Reward is $0$ everywhere else: i.e., $\sR_\mdpindex(s, a) = 0$ for all other states $s$ and actions $a$.
  \item With these dynamics, the exploration trajectory $\tauexp_\actions = (s_0, a_0, r_0, \ldots, s_H)$ is uniquely identified by the action sequence $\actions$ and the problem $\mdpindex$ if revealed in $s_H$.
    We therefore write $\tauexp_\actions = (\actions, \mdpindex)$ for when $\actions = \actions_\star$ reveals the problem $\mdpindex$, and $\tauexp_\actions = (\actions, 0)$, otherwise.
\end{itemize}

\paragraph{Uniform random exploration.}
In this general setting, we analyze the number of samples required to learn the optimal exploration policy with \rl and \ours via $\epsilon$-greedy tabular Q-learning.
We formally analyze the simpler case where $\epsilon = 1$, i.e., uniform random exploration, but empirically find that \ours only learns faster with smaller $\epsilon$, and \rl only learns slower.

In this particular tabular example with deterministic dynamics that encode the entire action history and rewards, learning a per timestep Q-value is equivalent to learning a Q-value for the entire trajectory.
Hence, we denote exploration Q-values $\qexp(\actions)$ estimating the returns from taking the entire sequence of $H$ actions $\actions$ at the initial state $s_0$ and
exeuction Q-values $\qin(\tauexp, \actions)$ estimating the returns from taking the entire sequence of $H$ actions $\actions$ at the initial state $s_0$ given the exploration trajectory $\tauexp$.
We drop $s_0$ from notation, since it is fixed.

Recall that \rl learns exploration Q-values $\qexp$ by regressing toward the exploitation Q-values $\qin$.
We estimate the exploitation Q-values $\qin(\tauexp, \actions)$ as the sample
mean of returns from taking actions $\actions$ given the exploration
trajectory $\tauexp$ and estimate the exploration Q-values $\qexp(\actions)$ as the sample mean of the targets.
More precisely, for action sequences $\actions \neq \actions_\star$, the
resulting exploration trajectory $\tauexp_\actions$ is deterministically $(\actions, 0)$, so we set
$\qexp(\actions) = \vin(\tauexp_\actions) = \max_{\actions'} \qin(\tauexp_\actions, \actions')$.
For $\actions_\star$, the resulting exploration trajectory $\tauexp_{\actions_\star}$ may be any of $(\actions_\star, \mdpindex)$ for
$\mdpindex \in \problems$, so we set $\qexp(\actions_\star)$ as the empirical mean of $\vin(\tauexp_{\actions_\star})$ of observed $\tauexp_{\actions_\star}$.

Recall that \ours learns exploration Q-values $\qexp$ by regressing toward the learned decoder $\log \hat{q}(\mdpindex \mid \tauexp_\actions)$.
We estimate the decoder $\hat{q}(\mdpindex \mid \tauexp_\actions)$ as the empirical counts of $(\mdpindex, \tauexp_\actions)$ divided by the empirical counts of $\tauexp_\actions$ and similarly estimate the Q-values as the empirical mean of $\log \hat{q}(\mdpindex \mid \tauexp_\actions)$.
We denote the exploration Q-values learned after $t$ timesteps as $\qexp_t$, and similarly denote the estimated decoder after $t$ timesteps as $\hat{q}_t$.

Given this setup, we are ready to state the formal sample complexity results.
Intuitively, learning the exploitation Q-values for \rl is slow, because, in problem $\mdpindex$, it involves observing the optimal exploration trajectory from taking actions $\actions_\star$ and then observing the corresponding exploitation actions $\actions_\mdpindex$, which only jointly happens roughly once per $|\sA|^{2H}$ samples.
Since \rl regresses the exploration Q-values toward the exploitation Q-values, the exploration Q-values are also slow to learn.
In contrast, learning the decoder $\hat{q}(\mdpindex \mid \tauexp_\actions)$ is much faster, as it is independent of the exploitation actions, and in particular, already learns the correct value from a single sample of $\actions_\star$.
We formalize this intuition in the following proposition, which shows that \ours learns in a factor of at least $|\sA|^H |\problems|$ fewer samples than \rl.

\begin{prop}
Let $T$ be the number of samples from uniform random exploration such that the greedy-exploration policy is guaranteed to be optimal (i.e., $\arg\max_{\actions} \qexp_t(\actions) = \actions_\star$) for all $t \geq T$.
If $\qexp$ is learned with \ours, the expected value of $T$ is $\mathcal{O}(|\sA|^H \log{|\sA|^H})$.
If $\qexp$ is learned with \rl, the expected value of $T$ is $\Omega(|\sA|^{2H} |\problems| \log{|\sA|^{H}})$.
\end{prop}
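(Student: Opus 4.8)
I would prove the two bounds by separate coupon-collector arguments, each tuned to the quantity that the method's exploration Q-value regresses toward: the decoder $\hat{q}$ for \ours, and the bootstrapped exploitation values for \rl. Throughout, uniform random exploration ($\epsilon = 1$) means each episode executes an action sequence drawn uniformly from $\sA^H$, so a fixed sequence $\actions$ is played with probability $|\sA|^{-H}$ per episode.

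\textbf{\ours upper bound.} The key point is that \ours's target $\log\hat{q}(\mdpindex \mid \tauexp_\actions)$ is decoupled from the exploitation actions and is learned almost instantly. Since $(\actions_\star,\mdpindex)$ deterministically pairs with $\mdpindex$, a single observation of $\actions_\star$ forces $\hat{q}(\mdpindex\mid(\actions_\star,\mdpindex)) = 1$, so $\qexp(\actions_\star) = \log 1 = 0$ and remains $0$ thereafter. For $\actions \neq \actions_\star$ the trajectory $(\actions, 0)$ is independent of $\mdpindex$, so once two distinct problems have been seen with this trajectory we get $\hat{q}(\cdot\mid(\actions,0)) < 1$ on each observed problem and hence $\qexp(\actions) < 0$, a gap that only widens as more problems appear. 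Thus the correct ordering $\qexp(\actions_\star) > \qexp(\actions)$, once established, is preserved for all later $t$, and $T$ is just the first time every sequence in $\sA^H$ has been played a constant number of times (with two distinct problems for the non-star sequences, which occurs with high probability because $\mdpindex$ is uniform over the large set $\sM$). This is the classical coupon-collector time over $|\sA|^H$ equally likely coupons, giving $\E[T] = \mathcal{O}(|\sA|^H\log|\sA|^H)$.

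\textbf{\rl lower bound.} Here \rl's exploration target is the bootstrap $\vin(\tauexp_{\actions_\star}) = \max_{\actions'}\qin((\actions_\star,\mdpindex),\actions')$. Because $(\actions_\star,\mdpindex)$ reveals $\mdpindex$ and only $\actions_\mdpindex$ earns reward, this equals $1$ exactly when problem $\mdpindex$ has been \emph{solved}: some earlier trial both explored $\actions_\star$ and exploited the matching $\actions_\mdpindex$. Consequently $\qexp(\actions_\star)$ is, up to the exploitation-learning lag, the fraction of observed $\actions_\star$-trajectories whose problem has been solved. A single trial can solve a \emph{given} problem only by sampling it, exploring $\actions_\star$ (probability $|\sA|^{-H}$), and exploiting $\actions_\mdpindex$ (probability $|\sA|^{-H}$), so the per-trial probability of solving a specific problem is at most $|\sM|^{-1}|\sA|^{-2H}$ and the expected number of newly solved problems per trial is at most $|\sA|^{-2H}$.

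\textbf{Completing the lower bound, and the main obstacle.} To certify $\arg\max_\actions \qexp_t(\actions) = \actions_\star$ for \emph{all} $t \ge T$, I would argue that $\qexp(\actions_\star)$ must be pushed close to $1$: the competing values $\qexp(\actions) = \vin((\actions,0))$ for $\actions \neq \actions_\star$ can be spuriously inflated toward $1$ while $\actions$ is under-sampled (a lucky exploitation guess after the uninformative trajectory $(\actions,0)$), relaxing only slowly to their limit $1/|\sM|$, so robust dominance forces essentially every problem still drawable to be solved. Reaching within a constant fraction of all $|\sM|$ solved problems, given that solve-events occur at rate at most $|\sA|^{-2H}$ per trial and point to a uniformly random problem, is a coupon-collector \emph{lower} bound over $|\sM|$ coupons, yielding $\E[T] = \Omega(|\sM|\,|\sA|^{2H}\log|\sM|) = \Omega(|\sA|^{2H}|\sM|\log|\sA|^H)$ trials, and hence samples. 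I expect the delicate step to be exactly this last argument: rigorously showing that the ``for all $t \ge T$'' guarantee forces (essentially) all problems to be solved---i.e., controlling the inflation of the under-sampled competitors $\qexp(\actions)$ and ruling out that noise spuriously keeps $\actions_\star$ as the argmax before enough problems are solved---and then turning the per-trial ``at most $|\sA|^{-2H}$ newly solved problems'' statement into a clean coupon-collector lower bound carrying the logarithmic factor. By contrast the \ours side is routine, so the entire difficulty concentrates in the \rl lower bound.
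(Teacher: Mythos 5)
Your proposal follows essentially the same route as the paper's proof on both halves. For \ours: the decoder pins $\qexp(\actions_\star)$ at $\log 1 = 0$ after a single observation of $\actions_\star$, while observing two distinct problems paired with $(\actions, 0)$ forces $\qexp(\actions) < 0$ permanently, so the stopping time is a coupon-collector time over the $|\sA|^H$ action sequences, giving $\mathcal{O}(|\sA|^H \log |\sA|^H)$. For \rl: the for-all-$t \ge T$ guarantee forces problems to be ``solved'' (jointly observing $\actions_\star$ and $\actions_\mdpindex$), and solving is a coupon-collector process with per-trial probability $\frac{1}{|\problems|\,|\sA|^{2H}}$ per problem. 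This is exactly the paper's argument, including your justification via spurious inflation of the competitors $\vin_t((\actions,0))$, which mirrors the paper's one-line justification that otherwise ``there is a small but non-zero probability'' that some competitor overtakes $\qexp(\actions_\star)$ at a later time.

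There is one quantitative slip in your final step, however. You relax the requirement to ``essentially every problem'' and then invoke a coupon-collector lower bound for reaching ``within a constant fraction of all $|\sM|$ solved problems,'' claiming this yields $\Omega(|\sM|\,|\sA|^{2H}\log|\sM|)$. It does not: collecting a constant fraction $c\,|\sM|$ of the coupons takes only about $|\sM|\,|\sA|^{2H}\log\frac{1}{1-c} = \Theta(|\sM|\,|\sA|^{2H})$ expected trials --- the $\log|\sM|$ factor only appears when all but a vanishing fraction (e.g., all but $|\sM|^{1-\delta}$) of the coupons must be collected. The paper sidesteps this by asserting the stronger claim that the guarantee fails unless $\vin_T((\actions_\star,\mdpindex)) = 1$ for \emph{every} $\mdpindex \in \problems$, justified by precisely the inflation argument you already give: if any problem remains unsolved then $\qexp(\actions_\star) < 1$, and with nonzero probability some under-sampled competitor exceeds it at some $t > T$. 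So your mechanism for forcing problems to be solved is the right one; you just need to push it to all (or all but a polynomially smaller number of) problems, rather than a constant fraction, to recover the claimed logarithmic factor. Otherwise the two proofs agree.
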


\begin{proof}
  For \ours, $\qexp_T(\actions_\star) > \qexp_T(\actions)$ for all $\actions \neq \actions_\star$ if
  $\log \hat{q}_T(\mdpindex \mid (\actions_\star, \mdpindex)) > \log \hat{q}_T(\mdpindex \mid (\actions, 0))$
  for all $\mdpindex$ and $\actions \neq \actions_\star$.
  For all $t \geq T$, $\qexp_t$ is guaranteed to be optimal, since no sequence of samples will cause
  $\log \hat{q}_t(\mdpindex \mid (\actions_\star, \mdpindex)) = 0 \leq \log \hat{q}_t(\mdpindex \mid (\actions, 0))$ for any $\actions \neq \actions_\star$.
  This occurs once we've observed $(\mdpindex, (\actions, 0))$ for two distinct $\mdpindex$ for each $\actions \neq \actions_\star$ and we've observed $(\mdpindex, (\actions_\star, \mdpindex))$ for at least one $\mdpindex$.
  We can compute an upperbound on the expected number of samples required to observe $(\mdpindex, \tauexp_\actions)$ for two distinct $\mdpindex$ for each action sequence $\actions$ by casting this as a coupon collector problem, where each pair $(\mdpindex, \tauexp_\actions)$ is a coupon.
  There are $2|\sA|^H$ total coupons to collect.
  This yields that the expected number of samples is $\sO(|\sA|^H \log{|\sA|^H})$.

  For \rl, the exploration policy is optimal for all timesteps $t \geq T$ for some $T$ only if the exploitation values $\vin_T(\tauexp=(\actions_\star, \mdpindex)) = 1$ for all $\mdpindex$ in $\problems$.
  Otherwise, there is a small, but non-zero probability that $\vin_t(\tauexp=(\actions, 0))$ will be greater at some $t > T$.
  For the exploitation values to be optimal at all optimal exploration trajectories $\vin_T(\tauexp=(\actions_\star, \mdpindex)) = 1$ for all $\mdpindex \in \problems$,
  we must jointly observe exploration trajectory $\tauexp = (\actions_\star, \mdpindex)$ and corresponding action sequence $\actions_\mdpindex$ for each problem $\mdpindex \in \problems$.
  We can lower bound the expected number of samples required to observe this by casting this as a coupon collector problem, where each pair $(\tauexp=(\actions_\star, \mdpindex), \actions_\mdpindex)$ is a coupon.
  There are $|\problems| \cdot |\sA|^H$ unique coupons to collect and collecting any coupon only occurs with probability $\frac{1}{|\sA|^H}$ in each episode.
  This yields that the expected number of samples is $\Omega(|\sA|^{2H} \cdot|\problems| \cdot\log{|\sA|^{H}})$.
\end{proof}

\end{document}